%% file: neurips_2026.tex
\documentclass{article}

\PassOptionsToPackage{table}{xcolor}

\usepackage[preprint]{neurips_2026}
\usepackage{subcaption}
\usepackage{pdfpages}
\usepackage{wrapfig}
\usepackage{float}

\usepackage[utf8]{inputenc} % allow utf-8 input
\usepackage[T1]{fontenc} % use 8-bit T1 fonts
\usepackage{hyperref}  % hyperlinks
\usepackage{url}  % simple URL typesetting
\usepackage{booktabs}  % professional-quality tables
\usepackage{amsfonts}  % blackboard math symbols
\usepackage{nicefrac}  % compact symbols for 1/2, etc.
\usepackage{microtype}  % microtypography

\usepackage{mathtools} % amsmath with fixes and additions (needed for \coloneqq)
\usepackage{tikz} % nice language for creating drawings and diagrams
\usepackage{amssymb} % needed for \mathbb, \rightsquigarrow, etc.
\usepackage{algorithm}
\usepackage{algorithmic}
\usepackage{tabularx}
 \usepackage{multirow}
 \usepackage{amsthm}

\usepackage{scalerel}  % Required for the \scaleto command used in the confidence intervals
\usepackage{hyperref} % Required for \href in author block and clickable links
\usepackage{cleveref}

\definecolor{ourblue}{HTML}{2B6CBC}  % Deep blue for text
\definecolor{ourlightblue}{HTML}{E9F1F8} % Very light blue for column backgrounds
\definecolor{ourmiddle}{HTML}{D95319}  % Accent color (often burnt orange/red) for "Ours"
\definecolor{ourlightmiddle}{HTML}{FDF0EB} % Very light accent for column backgrounds
\hypersetup{
 colorlinks=true,
 citecolor=ourmiddle, % Colors the inline reference names (e.g., Smith et al., 2024)
 linkcolor=ourblue, % Colors internal document links (e.g., Figure 1, Equation 3)
 urlcolor=ourblue  % Colors external web URLs
}

\newtheorem{theorem}{Theorem}
\newtheorem{proposition}[theorem]{Proposition}

\usepackage[many]{tcolorbox}

\tcolorboxenvironment{theorem}{
 enhanced,
 breakable,   % Allows the box to break across pages/columns
 colback=ourlightmiddle, % Background color
 frame hidden,  % Removes the border for a cleaner, modern look
 boxrule=0pt,  % Border thickness
 top=2mm, bottom=2mm, left=2mm, right=2mm, % Inner padding
 sharp corners  % Use 'arc=2mm' if you prefer rounded corners
}

\tcolorboxenvironment{proposition}{
 enhanced,
 breakable,
 colback=ourlightmiddle, 
 frame hidden,
 boxrule=0pt,
 top=2mm, bottom=2mm, left=2mm, right=2mm,
 sharp corners
}

\title{Entropy-Regularized Adjoint Matching for Offline Reinforcement Learning}

\author{%
  Abdelghani Ghanem \quad\quad Mounir Ghogho \\
  College of Computing\\
  Mohammed VI Polytechnic University\\
  Rabat 11103, Morocco \\
  \texttt{\{abdelghani.ghanem-ext, mounir.ghogho\}@um6p.ma}
}

\begin{document}

\maketitle

\begin{abstract}
Integrating expressive generative policies, such as flow-matching models, into offline reinforcement learning (RL) allows agents to capture complex, multi-modal behaviors. While Q-learning with Adjoint Matching (QAM) stabilizes policy optimization via the continuous adjoint method, it remains inherently bound to the fixed behavior distribution. This dependence induces a \textit{popularity bias} that can suppress high-reward actions in low-density regions, and creates a \textit{support binding} that restricts off-manifold exploration. Existing workarounds, such as appending \textit{residual} Gaussian policies, often re-introduce the expressivity bottlenecks associated with unimodal distributions. In this work, we propose \textit{Maximum Entropy Adjoint Matching} (ME-AM), a unified framework that addresses these limitations within the continuous flow formulation. ME-AM incorporates two mechanisms: (1) a Mirror Descent entropy maximization objective that mitigates the popularity bias to facilitate the extraction of optimal policies from offline datasets, and (2) a \textit{Mixture Behavior Prior} that broadens the geometric support to encompass out-of-distribution high-reward regions. By exploring this extended geometry, ME-AM identifies robust actions while preserving the absolute continuity of the generative vector field. Empirically, ME-AM demonstrates competitive or superior performance compared to prior state-of-the-art (SOTA) methods across a diverse suite of sparse-reward continuous control environments.

\textbf{Keywords:} Offline Reinforcement Learning, Flow Matching, Adjoint Matching, Maximum Entropy.
\end{abstract}
\section{Introduction}

\begin{figure}[t]
  \centering
  
  % --- LEFT SIDE: TIKZ DIAGRAM ---
  \begin{minipage}[c]{0.56\textwidth}
   \centering
   % TWEAK HEIGHT HERE: Change '4.5cm' to adjust the maximum height
   \resizebox{!}{4.5cm}{\input{figs/method_overview.tex}}
  \end{minipage}\hfill
  % --- RIGHT SIDE: AGGREGATE PLOT ---
  \begin{minipage}[c]{0.44\textwidth}
   \centering
   % TWEAK HEIGHT HERE: Change '4.5cm' to match the left side
   \includegraphics[width=\linewidth, height=4.5cm, keepaspectratio]{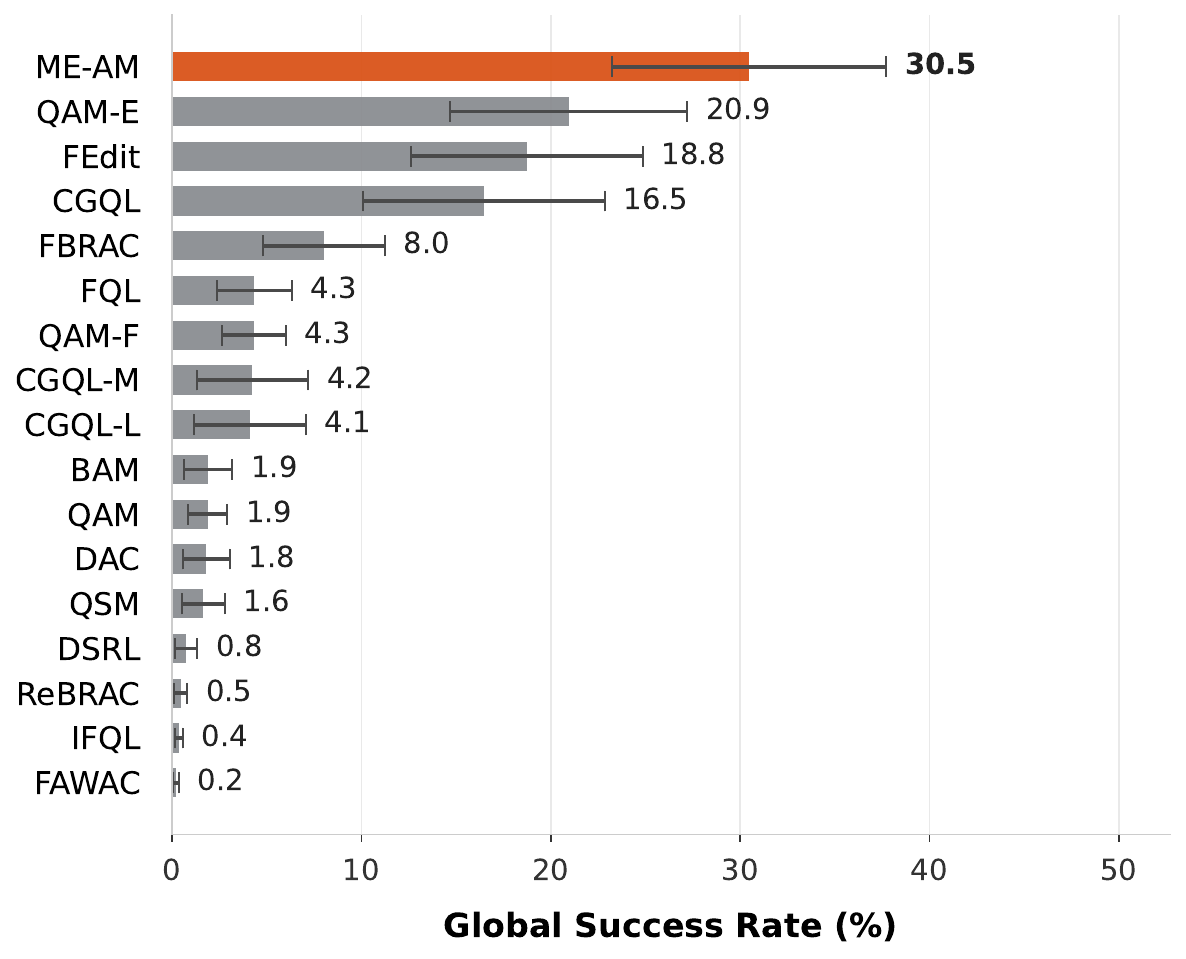}
  \end{minipage}

  % --- COMBINED MASTER CAPTION ---
  \caption{\textbf{ME-AM: Maximum Entropy Adjoint Matching.} \textbf{Left:} While prior methods rely on disjointed edits to reach out-of-distribution optimal actions, ME-AM expands the geometric support and uses entropy maximization \citep{de2025provable} to reach high-reward regions via a unified, continuous flow. \textbf{Right:} Aggregate offline RL performance (2 domains on OGBench \citep{park2025ogbench}, 10 tasks, 8 seeds).}
  \label{fig:teaser_overview}
\end{figure}

The integration of continuous-time generative models, such as flow matching and diffusion, has advanced offline RL by enabling agents to capture expressive, multi-modal behavior distributions \citep{ren2024diffusion, fang2025diffusion, edp_kang2023, dtql_chen2024, srpo_chen2024, qgpo_lu2023, qvpo_ding2024, dql_wang2023, diffcps_he2023, consistencyac_ding2024, srdp_ada2024, entropydql_zhang2024, hansen2023idql}. However, optimizing these generative policies to maximize a critic's reward introduces a critical tension: the agent must step off the empirical data manifold to discover novel optimal actions, yet it must remain safely anchored to the offline dataset to prevent catastrophic overestimation. While recent continuous-time Optimal Control frameworks, such as Adjoint Matching \cite{li2026qlearning, domingo-enrich2025adjoint}, have stabilized this optimization process by bypassing unstable solver backpropagation, strict adherence to the data manifold inadvertently traps the agent.

This tension between maximizing reward and avoiding out-of-distribution (OOD) extrapolation has deep roots in offline RL \citep{fujimoto2019off, kumar2019stabilizing, levine2020offline}. Extracting an optimal policy fundamentally requires giving sufficient advantage to the critic to filter out sub-optimal behavior. A straightforward approach, common in KL-regularized and weighted-regression literature \citep{peters2010relative, peng2019advantage, nair2020awac, wang2020critic, kostrikov2021offline}, relies on temperature scaling to emphasize high-reward actions ($\pi^* \propto \pi_\beta \exp(Q/\tau)$). However, this strict multiplicative dependence on the behavior prior creates a structural bottleneck that we formally characterize as the \textit{Support-Binding Dilemma} (SBD), paralyzing the agent across two distinct dimensions: density and geometry. On the density front, it induces a \textit{popularity bias}: to overcome the suppression of a rare expert action by $\pi_\beta$, the temperature must be aggressively tuned. This exponential scaling amplifies the critic's function approximation errors, forcing historical methods to dampen the contrast (e.g., via weight clipping) and accept sub-optimal, high-frequency actions just to maintain stability. On the geometric front, it creates a \textit{zero-support trap}: if an optimal action lies outside the behavioral support, $\pi_\beta = 0$. The constraint forbids the agent from ever reaching this region, halting the discovery of novel behaviors regardless of the reward signal. Modern continuous-time frameworks like Adjoint Matching \cite{li2026qlearning} seamlessly inherit this exact dual pathology.

To circumvent the geometric limitation of the zero-support trap, current  SOTA continuous-time frameworks typically employ residual \textit{edit} strategies or post-processing heuristics \cite{li2026qlearning, yuan2024policy, dong2026expo}. By appending a learnable Gaussian policy to the generative model's terminal output, these methods attempt to force the agent off the valid data manifold to reach novel optimal modes. This heuristic assumes that the base flow can navigate to the general vicinity of an optimal action, leaving the Gaussian edit to perform local refinement. However, in regions devoid of data support, the base flow yields sub-optimal actions; a local Gaussian edit cannot bridge the geometric gap to distant optimal modes without injecting, destabilizing variance. Furthermore, by forcing a stochastic, unimodal jump at the final integration step, terminal edits act as a structural patch that breaks the continuous Ordinary Differential Equation (ODE) formulation. This disrupts the coherent vector field trajectory and re-introduces the exact expressivity bottlenecks that generative models were designed to overcome.

In this work, we propose \textit{Maximum Entropy Adjoint Matching (ME-AM)}, a novel framework that solves the SBD entirely within the continuous flow formulation. Our approach addresses both the geometric and density limitations simultaneously: (i) \textit{Geometric Expansion via a Reward-Guided Mixture Prior:} Rather than appending Gaussian edits at inference time to escape the zero-support trap, we introduce a learnable, critic-guided expansion directly into the flow's \textit{training targets}. By augmenting the behavior dataset with Q-maximized actions, we extend the valid geometric support. The model natively learns to transport noise to these out-of-distribution modes, navigating the void between data clusters while maintaining a structurally sound, continuous ODE. (ii) \textit{Density Flattening via Mirror Descent:} To overcome popularity bias, we integrate an amortized Maximum Entropy (MaxEnt) objective directly into the adjoint system \cite{de2025provable, hsieh2019finding}. By executing functional Mirror Descent in continuous time, ME-AM aligns the generative model's score function with a uniform target \cite{hazan2019maxent}, exponentially flattening the peaked behavior density across the extended support. Crucially, this density flattening allows the agent to emphasize high-reward actions without requiring the extreme, error-amplifying contrast of strict temperature scaling.

Our core contribution is the ME-AM framework,which introduces a Reward-Guided Mixture Prior and functional Mirror Descent to overcome both the geometric and density limitations of offline datasets. Furthermore, we provide theoretical guarantees establishing that this continuous-time objective converges exactly to the targeted density-flattened stationary policy. Finally, empirical evaluations on challenging, sparse-reward continuous control benchmarks demonstrate that ME-AM achieves state-of-the-art offline performance and maintains robust sample efficiency during offline-to-online fine-tuning.
\section{Related Work}
\label{sec:related}

\textbf{Generative Policy Optimization and the Support-Binding Problem.} While Adjoint Matching \cite{chen2018neural,domingo-enrich2025adjoint,li2026qlearning} bypasses solver differentiation, its strict regularization against the behavior density introduces a zero-support trap, where learning gradients effectively vanish in OOD regions devoid of data. To facilitate OOD exploration, methods often append terminal Gaussian edits (e.g., QAM-Edit) \cite{li2026qlearning,yuan2024policy,dong2026expo}. While concurrent work avoids inference-time edits by expanding flow coverage via verifier-constrained entropy maximization \cite{deverifier}, it relies on alternating projection steps against a domain-specific oracle. In contrast, ME-AM achieves geometric expansion natively during training, driven entirely by the standard offline critic.

\textbf{Weighted Regression, Popularity Bias, and the Density Trap.}
Many KL-regularized offline RL algorithms, such as REPS \cite{peters2010relative}, AWAC \cite{nair2020awac}, and CRR \cite{wang2020critic}, converge to an optimal behavior-weighted Boltzmann policy: $\pi^*(a|s) \propto \pi_\beta(a|s) \exp(Q(s, a)/\tau)$. Continuous-time Adjoint Matching \cite{domingo-enrich2025adjoint,li2026qlearning} is mathematically shown to target a similar distribution. However, this multiplicative dependence on the prior introduces a \textit{popularity bias}. If an optimal action is rare in the dataset, the vanishing $\pi_\beta$ term suppresses the exponential reward signal. Because standard weighted regression methods project a neural policy onto $\pi^*$ via Forward Kullback-Leibler (KL) divergence using importance sampling, the $\pi_\beta$ density structurally cancels out of the loss function. This prevents explicitly flattening the density without introducing unstable explicit estimators. 

\textbf{MaxEnt and Distributional Control.}
While MaxEnt RL \cite{hazan2019maxent} is conventionally used to encourage online exploration, ME-AM leverages it to resolve the offline popularity bias. By maximizing the differential entropy of the policy, the generative model flattens its density across the extended mixture manifold, achieving the necessary tempering that prior weighted-regression methods cannot explicitly enforce. Applying this distributional objective to generative flows, however, is challenging. Evaluating the exact differential entropy requires integrating the trace of the Jacobian over time, which is computationally expensive in high-dimensional action spaces \cite{chen2018neural}. To bypass explicit density evaluation, frameworks like EXPO \cite{dong2026expo} append a Gaussian edit policy to a base generative model to permit standard closed-form entropy regularization. While convenient, this restricts entropy maximization to a unimodal residual primarily designed for local online exploration, leaving the macroscopic density of the expressive base flow unchanged. To achieve entropy maximization directly within the generative model, recent literature formulates it as sequential fine-tuning via Functional Mirror Descent \cite{de2025provable,hsieh2019finding}, where the intractable entropy gradient simplifies to the negative score function. ME-AM synthesizes these approaches by embedding this score-based entropy gradient directly into the terminal boundary condition of the Adjoint Matching ODE.

\section{Preliminaries}
\label{sec:preliminaries}

We consider the standard RL setting formulated as a Markov Decision Process (MDP) $\mathcal{M} = (\mathcal{S}, \mathcal{A}, P, R, \gamma, \rho_0)$. Here, $\mathcal{S} \subseteq \mathbb{R}^{d_s}$ and $\mathcal{A} \subseteq \mathbb{R}^{d_a}$ represent the continuous state and action spaces, respectively. The function $P: \mathcal{S} \times \mathcal{A} \to \Delta_{\mathcal{S}}$ defines the transition dynamics (where $\Delta_{\mathcal{S}}$ denotes the set of probability distributions over $\mathcal{S}$), $R: \mathcal{S} \times \mathcal{A} \to \mathbb{R}$ is the scalar reward function, $\gamma \in [0, 1)$ is the discount factor, and $\rho_0 \in \Delta_{\mathcal{S}}$ is the initial state distribution. We assume access to a static offline dataset $\mathcal{D} = \{(s_i, a_i, r_i, s'_i)\}_{i=1}^{|\mathcal{D}|}$ generated by an unknown behavior policy $\pi_\beta(\cdot|s)$, where $s'_i \sim P(\cdot | s_i, a_i)$ and $r_i = R(s_i, a_i)$.

Our primary goal (\textit{offline RL}) is to learn a policy $\pi(\cdot|s): \mathcal{S} \to \Delta_{\mathcal{A}}$ from the dataset $\mathcal{D}$ that maximizes the expected discounted return:
$$\mathcal{J}(\pi) = \mathbb{E}_{s_0 \sim \rho_0, a_k \sim \pi(\cdot|s_k), s_{k+1} \sim P(\cdot|s_k, a_k)} \left[ \sum_{k=0}^\infty \gamma^k R(s_k, a_k) \right].$$
In the context of offline RL, we typically constrain the learned policy to the support of the empirical behavior distribution to avoid evaluating out-of-distribution actions that can trigger severe overestimation bias \cite{kumar2019stabilizing,li2026qlearning}. As a secondary goal (\textit{offline-to-online RL}), we aim to sample-efficiently fine-tune this pre-trained policy by subsequently interacting with the MDP.

\textbf{Conditional Flow Matching.} Rather than explicitly parameterizing the policy $\pi(\cdot|s)$, we model it implicitly using Conditional Flow Matching (CFM) \cite{lipman2023flow}. The policy is induced by a generative ODE that transports standard Gaussian noise $x_0 \sim p_0=\mathcal{N}(0, I_{d_a})$ to the data distribution $p_1 \approx \pi_\beta(\cdot|s)$ over a continuous time horizon $t \in [0, 1]$:
\begin{equation}
 \frac{dx_t}{dt} = v_{\theta_{\text{base}}}(x_t, t \mid s), \quad x_0 \sim p_0,
\end{equation}
where $x_t \in \mathbb{R}^{d_a}$ is a latent action variable, and $v_{\theta_{\text{base}}} : \mathbb{R}^{d_a} \times [0,1] \times \mathcal{S} \to \mathbb{R}^{d_a}$ is a learnable neural vector field conditioned on the state $s \in \mathcal{S} \subseteq \mathbb{R}^{d_s}$. Throughout the paper, we use uppercase $X_t$ to denote the random variable at time $t$, and lowercase $x_t$ to denote its realization. We adopt an \textit{Independent Linear} probability path, which corresponds to a conditional vector field $u_t(x_t \mid x_1) = x_1 - x_0$, where $u_t(\cdot \mid x_1) : \mathbb{R}^{d_a} \to \mathbb{R}^{d_a}$ and $x_1 \in \mathbb{R}^{d_a}$ denotes the target action.

To bypass intractable marginalization, CFM regresses $v_{\theta_{\text{base}}}$ directly against this conditional path. Given an offline dataset $\mathcal{D}$ with empirical actions $a \in \mathbb{R}^{d_a}$, the model is trained via:
\begin{equation}
\label{CFM_loss}
 \mathcal{L}_{\text{CFM}}(\theta_{\text{base}}) = \mathbb{E}_{t \sim \mathcal{U}(0,1), (s, a) \sim \mathcal{D}, x_0 \sim p_0} \left[ \| v_{\theta_{\text{base}}}(x_t, t \mid s) - (a - x_0) \|^2 \right].
\end{equation}

\textbf{Adjoint Matching.} Optimizing the objective in Eq.~\ref{CFM_loss} yields a behavioral cloning policy that matches the empirical action distribution. To instead obtain a $Q$-value maximizing policy, we formulate the problem as a conditional continuous-time Stochastic Optimal Control (SOC) problem for a given state $s$ \cite{kappen2005path, rawlik2013stochastic, domingo-enrich2025adjoint}. In this formulation, the policy is parameterized by a fine-tuned vector field $v_{\theta_{\text{fine}}}$, which steers the generative process toward high-value actions under the critic $Q_\phi(s,a)$, where $Q_\phi : \mathcal{S} \times \mathbb{R}^{d_a} \to \mathbb{R}$ is a parametric value function with parameters $\phi$. The generative process is defined by the following memoryless Stochastic Differential Equation (SDE):
\begin{equation}
 dX_t = \left(2 v_{\theta_{\text{fine}}}(X_t, t \mid s) - X_t/t\right) dt + g_t \, dW_t,
\end{equation}
where $g_t = \sqrt{2(1-t)/t}$ and $(W_t)_{t \in [0,1]}$ is a standard $d_a$-dimensional Brownian motion. This choice of diffusion ensures that the initial noise $X_0$ and the terminal sample $X_1$ are independent, which is required for recovering the correct tilted distribution under the SOC formulation \citep{domingo-enrich2025adjoint}.

To maximize $Q_\phi(s,a)$ while preventing the policy from collapsing into out-of-distribution modes, the optimization is constrained by penalizing the deviation of the fine-tuned trajectory from the base trajectory. Let $C([0,1], \mathbb{R}^{d_a})$ denote the space of continuous trajectories. We define $\mathbb{P}_{\text{base}}(\cdot \mid X_0, s)$ and $\mathbb{P}_{\text{fine}}(\cdot \mid X_0, s)$ as probability measures over $C([0,1], \mathbb{R}^{d_a})$ induced by $v_{\theta_{\text{base}}}$ and $v_{\theta_{\text{fine}}}$, respectively. Under the Memoryless Flow Matching formulation \cite{domingo-enrich2025adjoint}, the KL divergence between these path measures admits the following quadratic form:
\begin{equation}
   D_{\mathrm{KL}}(\mathbb{P}_{\text{fine}}(\cdot \mid X_0, s) \parallel \mathbb{P}_{\text{base}}(\cdot \mid X_0, s)) = \mathbb{E}_{\boldsymbol{X} \sim \mathbb{P}_{\text{fine}}} \left[ \int_0^1 \frac{2}{g_t^2} \left\| v_{\theta_{\text{fine}}}(X_t, t \mid s) - v_{\theta_{\text{base}}}(X_t, t \mid s) \right\|^2 dt \right],
 \label{eq:girsanov_kl}
\end{equation}
where $\boldsymbol{X} = (X_t)_{t \in [0,1]}$ is a random trajectory. We note that this expression differs from the control-based KL formulation in \cite{domingo-enrich2025adjoint}. The equivalence between the two under our parameterization is derived in Appendix~\ref{sec:appendix_soc}.

Directly optimizing the SOC objective requires backpropagation through the stochastic dynamics, which is unstable in practice. Instead, we adopt Adjoint Matching \cite{domingo-enrich2025adjoint, li2026qlearning}, which introduces a lean adjoint state $\tilde{y}(X_t, t) \in \mathbb{R}^{d_a}$ evolving backward in time as:
\begin{equation}
 \frac{d}{dt} \tilde{y}(X_t, t) = - \nabla_{X_t} v_{\theta_{\text{base}}}(X_t, t \mid s)^\top \tilde{y}(X_t, t),
 \label{eq:lean_adjoint_ode}
\end{equation}
with terminal condition given by $\tilde{y}(x_1, 1) = -\frac{1}{\tau(s)}\,\nabla_{x_1} Q_\phi(s, x_1)$, where $x_1$ denotes the terminal sample and $\tau : \mathcal{S} \to \mathbb{R}_{>0}$ is a temperature parameter controlling the strength of the value maximization. In practice, we use a state-independent constant $\tau$ \citep{li2026qlearning}.

The fine-tuned flow is then optimized via the adjoint matching loss:
\begin{equation}
 \mathcal{L}_{\mathrm{AM}}(\theta_{\text{fine}}) = \mathbb{E}_{\boldsymbol{X} \sim \mathbb{P}_{\text{base}}, s \sim \mathcal{D}} \left[ \int_0^1 \left\| \frac{2}{g_t} \left( v_{\theta_{\text{fine}}}(X_t, t \mid s) - v_{\theta_{\text{base}}}(X_t, t \mid s) \right) + g_t \tilde{y}(X_t, t) \right\|^2 dt \right],
 \label{eq:am_loss}
\end{equation}
which aligns the fine-tuned vector field with the optimal control direction implied by the adjoint dynamics, and induces a policy $\pi(\cdot \mid s)$ given by the terminal distribution of the process, satisfying $\pi(\cdot \mid s) \propto \pi_\beta(\cdot \mid s)\exp\big(\frac{1}{\tau(s)}\,Q_\phi(s,\cdot)\big)$.

\textbf{Entropy Maximization via Mirror Descent.} To overcome the popularity bias of the offline dataset, we aim to maximize the differential entropy of the fine-tuned policy, characterized by the terminal density $p^{\text{fine}}_1(\cdot|s)$ induced by $v_{\theta_{\text{fine}}}$. Evaluating this exact entropy is computationally prohibitive, as it requires integrating the intractable trace of the vector field's Jacobian \cite{chen2018neural}. To bypass explicit density evaluation, we reframe the problem using Functional Mirror Descent (MD) \cite{de2025provable}. Instead of directly optimizing the exact entropy, MD maximizes a linearized surrogate regularized by a KL divergence penalty. For our fine-tuned density, the effective functional gradient driving this maximization simplifies exactly to the negative score evaluated at the terminal sample $x_1$:
\begin{equation}
 \nabla_{x_1} \left( \frac{\delta \mathcal{H}(p^{\text{fine}}_1)}{\delta p}[x_1] \right) = -\nabla_{x_1} \log p^{\text{fine}}_1(x_1|s).
\end{equation}

Crucially, enforcing the MD KL penalty does not require an explicit optimization constraint. As established above, our continuous-time SOC formulation inherently penalizes the KL divergence via the cumulative control cost (Equation \ref{eq:girsanov_kl}). Therefore, we seamlessly execute this entropy maximization step within the Adjoint Matching framework by injecting the negative score directly into the terminal adjoint state alongside the value gradient.

\textbf{The Boundary Instability of the Score.} Our approach relies on evaluating the score $\nabla_{x_t} \log p^{\text{fine}}_t(x_t|s)$ at the terminal boundary $t \to 1$, where $p^{\text{fine}}_t(\cdot \mid s)$ denotes the conditional density of $X_t$, induced by $v_{\theta_{\text{fine}}}$. While this quantity is analytically linked to the marginal vector field \cite{domingo-enrich2025adjoint,holderrieth2025introductionflowmatchingdiffusion,de2025flow}, its practical estimation is numerically unstable: the underlying formulation involves a $(1-t)^{-1}$ factor, causing neural approximation errors in $v_{\theta_{\text{fine}}}$ to be amplified as $t \to 1$ \cite{hu2025improving}. We provide a detailed discussion of this boundary instability in Appendix \ref{sec:appendix_extended_background}. To circumvent this issue, we adopt a decoupled auxiliary score parameterization (Section \ref{sec:method}).

\section{Maximum Entropy Adjoint Matching (ME-AM)}
\label{sec:method}

To bypass the zero-support trap discussed in Section \ref{sec:related} without structural compromise, ME-AM shifts the constraint from the strict empirical density to a tempered density over an extended geometric support. Our objective is to learn a flow-induced policy $\pi^*$ governed by:
\begin{equation}
 \pi^*(a|s) \propto \left( \pi_{\text{mix}}(a|s) \right)^{\delta(s)} \exp\left( \frac{1}{\kappa(s)} Q_\phi(s,a) \right),
\end{equation}
where $\pi_{\text{mix}}$ is a constructed mixture distribution that geometrically bridges isolated data clusters, $\kappa: \mathcal{S} \to \mathbb{R}_{>0}$ is the state-dependent effective reward temperature, and $\delta: \mathcal{S} \to (0, 1)$ actively flattens the prior density to overcome popularity bias.

\paragraph{The Reward-Guided Mixture Prior.}
\label{sec:method_mixture}

Standard Adjoint Matching confines the policy to the behavior support $\pi_\beta$, causing the analytical gradient to vanish in unexplored regions. To bridge the action space without breaking the continuous-time ODE, we apply a Reward-Guided Mixture Prior directly to the flow’s training targets.

For a batch fraction $\epsilon \in (0, 1)$, we replace a portion of the dataset actions $a_{\text{data}} \in \mathbb{R}^{d_a}$ with Q-maximized synthetic targets. We parameterize an auxiliary Gaussian actor $\pi_\omega : \mathcal{S} \times \mathbb{R}^{d_a} \to \mathcal{P}(\mathbb{R}^{d_a})$ that predicts an absolute action $a$ conditioned on the state and an empirical action. To prevent premature local mode collapse, we optimize $\pi_\omega$ using a Soft Actor-Critic style objective \cite{haarnoja2018soft} with automatic entropy tuning. This dynamically adjusts a dual temperature variable $\alpha \in \mathbb{R}_{>0}$ to maintain the policy's differential entropy near a target of $\mathcal{H}_{\text{target}} = -d_a/2$:
\begin{equation}
\begin{aligned}
 \mathcal{L}_{\text{aug}}(\omega) &= \mathbb{E}_{a \sim \pi_\omega(\cdot | s, a_{\text{data}})} \left[ -Q_\phi(s, a) + \alpha \log \pi_\omega(a | s, a_{\text{data}}) \right], \\
 \mathcal{L}(\alpha) &= \alpha \left[ \mathbb{E}_{a \sim \pi_\omega(\cdot | s, a_{\text{data}})} \left[ \log \pi_\omega(a | s, a_{\text{data}}) \right] - \mathcal{H}_{\text{target}} \right].
\end{aligned}
\end{equation}

During target creation, we suppress sampling variance by extracting the deterministic mode of the trained Gaussian: $a_{\omega} = \text{clip}(\mu_\omega(s, a_{\text{data}}), -1, 1)$. For a detailed discussion on the structural design of this conditional mapping and its impact on the data manifold, see Appendix \ref{sec:appendix_negative_results}. The base flow treats these generated actions as static, stop-gradient ground-truth targets ($x_1$), yielding the mixture target distribution:
\begin{equation}
 \pi_{\text{mix}}(a|s) = (1 - \epsilon) \pi_\beta(a|s) + \epsilon \pi_{\omega}(a|s).
\end{equation}
\paragraph{Density Flattening via Mirror Descent in Continuous Time.}
\label{sec:method_maxent}

To eliminate the popularity bias of the offline dataset, we flatten the target density by integrating a MaxEnt objective directly into the flow model via Functional MD.

To bridge the continuous-time formulation with standard RL notation, we establish a formal equivalence: for any conditional vector field $v_f$, let $\pi^f(\cdot|s)$ denote its induced terminal marginal density, and let $\mathbb{P}_f$ denote its corresponding conditional path measure $\mathbb{P}_f(\cdot \mid X_0, s)$ over the trajectory space $C([0,1], \mathbb{R}^{d_a})$. Consequently, we equate the terminal flow state with the RL action ($x_1 = a$). We define three core components for our optimization: (1) \textit{The Active Policy ($\pi_k, \mathbb{P}_k$)} at MD algorithmic iteration $k$, induced by the fine-tuned vector field $v_k \equiv v_{\theta_{\text{fine}}}^{(k)}$; (2) \textit{The MD Anchor ($\pi_{k-1}, \mathbb{P}_{k-1}$)}, induced by a slowly moving target network $v_{k-1} \equiv v_{\theta_{\text{target}}}^{(k-1)}$ to stabilize Adjoint Matching \citep{li2026qlearning}; and (3) \textit{The Mixture Prior ($\pi_{\text{mix}}, \mathbb{P}_{\text{mix}}$)}, natively induced by the base flow trained against the augmented dataset (Section \ref{sec:method_mixture}), with its static vector field denoted as $v_{\text{mix}} \equiv v_{\theta_{\text{base}}}$.

We introduce state-dependent functions $\beta: \mathcal{S} \to \mathbb{R}_{>0}$ to control policy greediness, and $\eta: \mathcal{S} \to \mathbb{R}_{>0}$ to modulate the density-flattening penalty. To dictate path regularization, we employ a weighting function $\lambda: \mathcal{S} \to (0,1)$ that interpolates between the control cost against the static mixture base flow ($v_{\text{mix}}$) and the slow-moving target anchor ($v_{k-1}$). We consolidate these dual continuous-time path constraints into a convex path regularization term $\mathcal{R}_{\text{path}}(\mathbb{P}_k)$:
\begin{equation}
 \mathcal{R}_{\text{path}}(\mathbb{P}_k) = (1-\lambda(s)) D_{\text{KL}}(\mathbb{P}_k \parallel \mathbb{P}_{k-1}) + \lambda(s) D_{\text{KL}}(\mathbb{P}_k \parallel \mathbb{P}_{\text{mix}}).
\end{equation}
This ensures the total path temperature remains normalized to 1, yielding the unified local fine-tuning objective:
\begin{equation}
 \mathcal{J}_{\text{local}}(\pi_k) = \mathbb{E}_{a \sim \pi_k} \left[ \frac{1}{\beta(s)} Q_\phi(s,a) - \frac{1}{\eta(s)} \log \pi_{k-1}(a|s) \right] - \mathcal{R}_{\text{path}}(\mathbb{P}_k).
 \label{eq:local_objective}
\end{equation}

As established in Equation \ref{eq:girsanov_kl}, these continuous-time KL path constraints map directly to the expected $L_2$ distances between their respective vector fields. Because both costs act as quadratic energetic penalties over the same active variable $v_k$, they can be analytically unified.

\begin{proposition}[Equivalence of Convex KL Path Costs]
\label{prop:dual_kl_equivalence}
For any state-dependent weight $\lambda(s) \in (0,1)$, minimizing the path regularization $\mathcal{R}_{\mathrm{path}}(\mathbb{P}_k)$ with respect to the active drift $v_k$ is exactly equivalent to minimizing a single KL path cost $D_{\mathrm{KL}}(\mathbb{P}_k \parallel \mathbb{P}_{\mathrm{ref}})$ against an interpolated reference vector field $v_{\mathrm{ref}}$, defined as:
\begin{equation}
 v_{\mathrm{ref}}(x_t, t|s) = \lambda(s) v_{\mathrm{mix}}(x_t, t|s) + (1-\lambda(s)) v_{k-1}(x_t, t|s).
 \label{eq:interpolated_v_ref}
\end{equation}
\end{proposition}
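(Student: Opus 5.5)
The plan is to reduce both KL terms in $\mathcal{R}_{\mathrm{path}}$ to quadratic energies using the Girsanov-type identity of Equation~\ref{eq:girsanov_kl}, and then complete the square pointwise in $(x_t,t)$. All three path measures $\mathbb{P}_k$, $\mathbb{P}_{k-1}$, $\mathbb{P}_{\mathrm{mix}}$ are memoryless SDEs with the same diffusion coefficient $g_t$ and the same initial law. Their drifts differ only through $2v(\cdot)-X_t/t$. So the representation of Equation~\ref{eq:girsanov_kl} applies to each pair, with the expectation always taken under the active measure $\mathbb{P}_k$. This gives
\begin{equation*}
\mathcal{R}_{\mathrm{path}}(\mathbb{P}_k)=\mathbb{E}_{\boldsymbol{X}\sim\mathbb{P}_k}\left[\int_0^1\frac{2}{g_t^2}\Big((1-\lambda)\|v_k-v_{k-1}\|^2+\lambda\|v_k-v_{\mathrm{mix}}\|^2\Big)dt\right].
\end{equation*}
Here $\lambda=\lambda(s)$ is constant in time and in $x_t$ once the state is fixed, so it passes freely through the integral and the expectation.

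The key algebraic step is the pointwise identity for a convex combination of squared distances. With $v_{\mathrm{ref}}=\lambda v_{\mathrm{mix}}+(1-\lambda)v_{k-1}$ as in Equation~\ref{eq:interpolated_v_ref},
\begin{equation*}
(1-\lambda)\|v_k-v_{k-1}\|^2+\lambda\|v_k-v_{\mathrm{mix}}\|^2=\|v_k-v_{\mathrm{ref}}\|^2+\lambda(1-\lambda)\|v_{\mathrm{mix}}-v_{k-1}\|^2 .
\end{equation*}
I will verify it by expanding both sides. The cross terms $-2\langle v_k,\cdot\rangle$ match because the weights sum to one, and the remaining constants reduce to a variance-type term. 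Substituting back, the first piece is exactly $D_{\mathrm{KL}}(\mathbb{P}_k\parallel\mathbb{P}_{\mathrm{ref}})$, again by Equation~\ref{eq:girsanov_kl}, where $\mathbb{P}_{\mathrm{ref}}$ is the path measure induced by $v_{\mathrm{ref}}$. The second piece is $\mathbb{E}_{\mathbb{P}_k}\!\left[\int_0^1 \tfrac{2}{g_t^2}\lambda(1-\lambda)\|v_{\mathrm{mix}}-v_{k-1}\|^2dt\right]$.

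The main obstacle is this residual term. Its integrand does not involve $v_k$, but the expectation is over trajectories drawn from $\mathbb{P}_k$, so in principle it still depends on the control. I would handle this in the way Adjoint Matching treats its loss. In that setting (Equation~\ref{eq:am_loss}), optimization is over the drift $v_k$ as a pointwise-in-$(x,t)$ regression. The minimizer of the quadratic cost at each $(x_t,t)$ is then determined only by the $v_k$-dependent part, and the residual contributes nothing to the first-order optimality condition in $v_k$. I would therefore state the equivalence at the level of the control objective: the two functionals have identical gradients with respect to $v_k$ pointwise, and hence the same stationary points and minimizers. For a completely rigorous version, the fallback is to freeze the sampling measure, as is done when $\boldsymbol{X}\sim\mathbb{P}_{\mathrm{base}}$ in Equation~\ref{eq:am_loss}. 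Under a fixed sampling measure the residual is a genuine additive constant and the equivalence is exact.

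Finally, one must check that $v_{\mathrm{ref}}$ induces a valid memoryless reference process. This follows because the memoryless drift $2v-x/t$ is affine in $v$, so the interpolation of the reference drifts equals the drift built from $v_{\mathrm{ref}}$. The normalization $(1-\lambda)+\lambda=1$ ensures the coefficient in front of $D_{\mathrm{KL}}(\mathbb{P}_k\parallel\mathbb{P}_{\mathrm{ref}})$ is one.
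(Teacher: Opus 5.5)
\textbf{Same route as the paper, but your main resolution of the residual fails.} Your argument follows the paper's proof. You reduce both KL terms via Eq.~\ref{eq:girsanov_kl} to $\frac{2}{g_t^2}$-weighted quadratics and complete the square pointwise in $(x_t,t)$. Your residual $\lambda(1-\lambda)\|v_{\mathrm{mix}}-v_{k-1}\|^2$ is the paper's constant $C$ written out, and you are right that it is the crux.

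The gap is your main way of disposing of it: the claim that the two functionals have identical gradients in $v_k$ pointwise, and hence the same minimizers. Under $\mathbb{P}_k$ the residual is a running state cost $\mathbb{E}_{\mathbb{P}_k}\bigl[\int_0^1 c(X_t,t)\,dt\bigr]$, with $c=\frac{2\lambda(1-\lambda)}{g_t^2}\|v_{\mathrm{mix}}-v_{k-1}\|^2$. Perturbing $v_k$ by $\epsilon h$ reweights paths by a Girsanov factor. Its directional derivative is therefore $\mathbb{E}_{\mathbb{P}_k}\bigl[\bigl(\int_0^1 c\,dt\bigr)\bigl(\int_0^1 \tfrac{2}{g_t}h(X_t,t)\cdot dW_t\bigr)\bigr]$, which is nonzero in general.

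Concretely, the exact identity is
\[
\mathcal{R}_{\mathrm{path}}(\mathbb{P})=D_{\mathrm{KL}}(\mathbb{P}\parallel\mathbb{P}_{\mathrm{ref}})+\mathbb{E}_{\mathbb{P}}\Big[\int_0^1 c(X_t,t)\,dt\Big]=D_{\mathrm{KL}}(\mathbb{P}\parallel G)-\log Z,
\]
where $\frac{dG}{d\mathbb{P}_{\mathrm{ref}}}=Z^{-1}e^{-\int_0^1 c\,dt}$ and $Z=\mathbb{E}_{\mathbb{P}_{\mathrm{ref}}}\bigl[e^{-\int_0^1 c\,dt}\bigr]$. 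Thus $G$ is the normalized geometric mixture $\mathbb{P}_{k-1}^{1-\lambda}\mathbb{P}_{\mathrm{mix}}^{\lambda}$. In $(1-\lambda)\log\frac{d\mathbb{P}_{k-1}}{d\mathbb{P}_{\mathrm{ref}}}+\lambda\log\frac{d\mathbb{P}_{\mathrm{mix}}}{d\mathbb{P}_{\mathrm{ref}}}$ the stochastic integrals cancel, because $(1-\lambda)(v_{k-1}-v_{\mathrm{ref}})+\lambda(v_{\mathrm{mix}}-v_{\mathrm{ref}})=0$.

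It follows that the minimizer of $\mathcal{R}_{\mathrm{path}}$ is $G$, not $\mathbb{P}_{\mathrm{ref}}$, except in degenerate cases such as $v_{\mathrm{mix}}=v_{k-1}$ along trajectories. Once the terminal reward is added, $c$ is a genuine running cost, and it would add a $\nabla_x c$ forcing term to the lean adjoint ODE. Appealing to the adjoint-matching regression does not help. The regression target is the adjoint, which is exactly where a running cost enters, and the per-$(x_t,t)$ quadratic in Eq.~\ref{eq:am_loss} is the regression loss, not the KL integrand.

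\textbf{Your fallback is the correct result.} If you freeze the sampling measure so that it does not depend on $v_k$, the residual is a true additive constant, and the two losses share gradients and minimizers. This is also all the paper's proof establishes. It works with the instantaneous integrand and declares $\delta C/\delta v_k=0$, which is the same pointwise step you flagged, so it silently adopts your frozen-measure reading. You should drop the identical-gradients argument. State the frozen-measure version as the precise meaning of ``exactly equivalent,'' or add the caveat that exactness at the path level requires $v_{\mathrm{mix}}=v_{k-1}$ along $\mathbb{P}_k$-trajectories.

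Your final check on $v_{\mathrm{ref}}$ is fine. Affinity of $2v-x/t$ in $v$ makes $\mathbb{P}_{\mathrm{ref}}$ a diffusion with the same $g_t$, so Eq.~\ref{eq:girsanov_kl} applies to it.
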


The full proof is provided in Appendix \ref{sec:appendix_proof_kl}. Following Proposition \ref{prop:dual_kl_equivalence}, we utilize $v_{\text{ref}}$ directly in our Adjoint solver. 

Because the total effective temperature of this unified path cost is 1, the lean adjoint state must be initialized with the functional gradients of the terminal objective. We embed the Q-gradient and the MD score penalty directly into the terminal boundary condition at $t=1$:
\begin{equation}
 \tilde{y}(x_1, 1) = - \left( \frac{1}{\beta(s)} \nabla_{x_1} Q_\phi(s, x_1) - \frac{1}{\eta(s)} \nabla_{x_1} \log \pi_{k-1}(x_1|s) \right).
 \label{eq:terminal_adjoint_condition}
\end{equation}
By initializing the backward Adjoint ODE with this combined terminal state and regressing the active vector field against $v_{\text{ref}}$, ME-AM natively enforces both the geometric mixture anchor and the MD entropy maximization within a single, coherent optimization step.

\paragraph{Score Estimation and Amortized Implementation.}
\label{sec:method_score}

To bypass the numerical explosion of the analytical score at the terminal boundary ($t \to 1$), we train a lightweight auxiliary network $S_\psi(a, \sigma | s)$ via a variance-weighted denoising objective \cite{ho2020denoising}. By predicting injected noise across a continuous log-uniform spectrum $\sigma \in [\sigma_{\min}, \sigma_{\max}] \subset \mathbb{R}_{>0}$, we maintain capacity across microscopic scales while stabilizing optimization gradients:
\begin{equation}
 \mathcal{L}_{\text{score}}(\psi) = \mathbb{E}_{\sigma \sim \text{LogU}(\sigma_{\min}, \sigma_{\max}), \, a \sim \pi_{k-1}, \, z \sim \mathcal{N}(0, I_{d_a})} \left[ \left\| S_\psi(a + \sigma z, \sigma | s) - z \right\|^2 \right].
 \label{eq:variance_weighted_score}
\end{equation}

Because the base ODE ($v_{\text{mix}}$) inherently resolves global transport, we avoid iterative annealed sampling \cite{song2019generative}. To compute the exact MD penalty for the Adjoint initialization (Equation \ref{eq:terminal_adjoint_condition}), we perform a single deterministic query at the noise floor $\sigma_{\min}$ to algebraically recover the score: $-\nabla_{x_1} \log \pi_{k-1}(x_1|s) \approx S_\psi(x_1, \sigma_{\min} | s) / \sigma_{\min}$. 

Executing a full inner-loop optimization for every MD step is computationally prohibitive. We amortize this process by employing the continuous-time actor-critic architecture utilized in QAM \cite{li2026qlearning}. In addition to the conventional actor and critic networks, and their respective slow-moving targets, maintained in standard Adjoint Matching, our framework concurrently updates the MD target anchor $v_{k-1}$, the auxiliary score network $S_\psi$, and the geometric expansion network $\pi_\omega$. We refer the reader to Appendix \ref{sec:appendix_algorithm} for the complete ME-AM pseudo-code.
\paragraph{Theoretical Guarantee.}
By executing the ME-AM updates, we mathematically achieve the tempered target distribution introduced at the beginning of Section \ref{sec:method}, resolving the SBD. Because we decouple the state-dependent path interpolation weight $\lambda(s)$ from the reward scale $\beta(s)$ and entropy scale $\eta(s)$, ME-AM attains explicit analytical control over the final geometry and greediness of the policy. While we formally define these parameters as state-dependent to ensure our theoretical guarantees hold globally across the MDP, our practical implementation utilizes fixed, state-independent constants to simplify optimization.

\begin{theorem}[Optimal Stationary Policy of ME-AM]
\label{thm:optimal_policy}
Given the local fine-tuning objective $\mathcal{J}_{\mathrm{local}}(\pi_k)$ defined in Equation \ref{eq:local_objective}, assume the amortized updates reach a stationary point such that the sequence of active policies converges to the anchor policy ($\pi_k \to \pi_{k-1} \equiv \pi^*$). The optimal stationary policy $\pi^*(a|s)$ converges exactly to:
\begin{equation}
 \pi^*(a|s) = \frac{1}{Z(s)} \left( \pi_{\mathrm{mix}}(a|s) \right)^{\frac{\lambda(s) \eta(s)}{1 + \lambda(s) \eta(s)}} \exp \left( \frac{\eta(s)}{\beta(s)(1 + \lambda(s) \eta(s))} Q_\phi(s,a) \right),
\end{equation}
where $Z(s)$ is the state-dependent partition function.
\end{theorem}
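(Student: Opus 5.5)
The plan is to solve the local problem $\max_{\mathbb{P}_k}\mathcal{J}_{\mathrm{local}}$ in closed form at the level of path measures, read off the terminal marginal $\pi_k$ as a function of $\pi_{k-1}$ and $\pi_{\mathrm{mix}}$, and then impose the stationarity assumption $\pi_k=\pi_{k-1}=\pi^*$ to get a fixed-point equation that can be solved for $\pi^*$. Throughout, $s$ is fixed, so $\lambda,\beta,\eta$ are just positive scalars; I drop $s$ from the notation.

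\textbf{Step 1: collapse the path regulariser.} I would rewrite the convex combination of KL terms directly as one KL against a normalised geometric mean of path measures:
\begin{equation*}
(1-\lambda) D_{\mathrm{KL}}(\mathbb{P}_k \parallel \mathbb{P}_{k-1}) + \lambda D_{\mathrm{KL}}(\mathbb{P}_k \parallel \mathbb{P}_{\mathrm{mix}})
= D_{\mathrm{KL}}(\mathbb{P}_k \parallel \bar{\mathbb{P}}) - \log \bar{Z},
\qquad
\bar{\mathbb{P}} = \frac{1}{\bar Z}\,\mathbb{P}_{k-1}^{1-\lambda}\,\mathbb{P}_{\mathrm{mix}}^{\lambda}.
\end{equation*}
This is the measure-level counterpart of Proposition~\ref{prop:dual_kl_equivalence}. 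The constant $\bar Z$ does not depend on $\mathbb{P}_k$, so the optimisation over the active drift $v_k$ is unchanged.

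\textbf{Step 2: the standard Gibbs variational principle.} After Step 1, the objective has the form $\mathbb{E}_{\mathbb{P}_k}[r(X_1)] - D_{\mathrm{KL}}(\mathbb{P}_k\parallel\bar{\mathbb{P}})$, with terminal reward
\begin{equation*}
r(a)=\tfrac{1}{\beta}Q_\phi(s,a)-\tfrac{1}{\eta}\log\pi_{k-1}(a|s).
\end{equation*}
Its maximiser is therefore $\mathbb{P}_k^\star \propto \bar{\mathbb{P}}\, e^{r(X_1)}$. To pass to terminal marginals I use the memoryless noise schedule $g_t$, exactly as in the Adjoint Matching preliminaries. Under this schedule, every path measure in the family factorises as $p_0(X_0)\,\pi^f(X_1)$ times a common bridge $\mathbb{P}(\cdot\mid X_0,X_1)$ that does not depend on the drift. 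Consequently the geometric mean $\bar{\mathbb{P}}$ has terminal marginal proportional to $\pi_{k-1}^{1-\lambda}\pi_{\mathrm{mix}}^{\lambda}$, with $X_0$ still independent of $X_1$, and
\begin{equation*}
\pi_k(a|s)\;\propto\;\pi_{k-1}(a|s)^{1-\lambda-1/\eta}\;\pi_{\mathrm{mix}}(a|s)^{\lambda}\;\exp\!\big(Q_\phi(s,a)/\beta\big).
\end{equation*}
This is consistent with the adjoint terminal condition in Eq.~\eqref{eq:terminal_adjoint_condition}, which is just $-\nabla r$.

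\textbf{Step 3: fixed point.} Setting $\pi_k=\pi_{k-1}=\pi^*$ and taking logarithms gives
\begin{equation*}
(\lambda+1/\eta)\log\pi^* = \lambda\log\pi_{\mathrm{mix}} + Q_\phi/\beta + \text{const}.
\end{equation*}
Multiplying through by $\eta/(1+\lambda\eta)$ yields exactly the exponent $\lambda\eta/(1+\lambda\eta)$ on $\pi_{\mathrm{mix}}$ and the exponent $\eta/(\beta(1+\lambda\eta))$ on $Q_\phi$. All constants are absorbed into $Z(s)$, and $Z(s)$ is finite provided the right-hand side is integrable.

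\textbf{Main obstacle.} The delicate point is Step 2: the geometric mean of path measures must remain inside the memoryless family, with the same bridge and an independent $X_0$. Otherwise the terminal tilt does not transfer cleanly to the marginal. I would first try to justify this by writing the Girsanov densities relative to the uncontrolled memoryless reference process. If that proves awkward, the fallback is to work with Proposition~\ref{prop:dual_kl_equivalence} and $v_{\mathrm{ref}}$ directly. In that route, one checks that the residual $v_k$-independent quadratic term, $\lambda(1-\lambda)\|v_{\mathrm{mix}}-v_{k-1}\|^2$ weighted by $2/g_t^2$, affects only normalisation. It vanishes entirely at stationarity, where $v_{k-1}$ has converged.
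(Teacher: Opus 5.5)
Your proposal is correct and reaches the paper's formula by a different route. The paper never works with path measures. It rewrites each term of $\mathcal{R}_{\mathrm{path}}$ directly as a KL between terminal densities, $\int \pi_k(\log\pi_k-\log\pi_{k-1})\,\mathrm{d}a$ and $\int \pi_k(\log\pi_k-\log\pi_{\mathrm{mix}})\,\mathrm{d}a$. It then adds a Lagrange multiplier for normalisation and takes a pointwise functional derivative. This gives exactly your one-step update $\log\pi_k=\tfrac{1}{\beta}Q_\phi+(1-\lambda-\tfrac{1}{\eta})\log\pi_{k-1}+\lambda\log\pi_{\mathrm{mix}}-(1+\nu)$, and from there the fixed-point algebra is identical to your Step 3.

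Your Steps 1--2 stay on path space instead. You use the geometric-mean identity and the Gibbs variational principle, then the memoryless factorisation ($p_0(X_0)\,\pi^f(X_1)$ times a common bridge) to descend to terminal marginals. The paper's version buys brevity and elementary calculus. Yours justifies the step the paper takes silently, namely replacing path-space KLs by terminal KLs. It also makes explicit the hypothesis this needs: $\mathbb{P}_{k-1}$ and $\mathbb{P}_{\mathrm{mix}}$ must be exact memoryless processes sharing one bridge. That hypothesis is self-consistent, since each optimiser $\bar{\mathbb{P}}\,e^{r(X_1)}$ is again a terminal tilt of $\mathbb{P}_{\mathrm{mix}}$.

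Do not use your fallback, though. At stationarity $v_{k-1}$ coincides with $v_k$, i.e.\ with the fine-tuned field $v^*$, not with $v_{\mathrm{mix}}$. So $\lambda(1-\lambda)\tfrac{2}{g_t^2}\|v_{\mathrm{mix}}-v_{k-1}\|^2$ does not vanish for $\lambda\in(0,1)$. Nor is it a mere normalisation: it is evaluated along $X_t\sim\mathbb{P}_k$, so it acts as a running state cost whose expectation depends on $\mathbb{P}_k$.

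Your own Step 1 exposes this. By Girsanov, $\bar{\mathbb{P}}$ equals the $v_{\mathrm{ref}}$-process reweighted, up to normalisation, by $\exp\bigl(-\lambda(1-\lambda)\int_0^1 \tfrac{2}{g_t^2}\|v_{\mathrm{mix}}-v_{k-1}\|^2\,\mathrm{d}t\bigr)$. Dropping that factor, as the $v_{\mathrm{ref}}$ route of Proposition~\ref{prop:dual_kl_equivalence} does, changes the optimal terminal marginal in general. Moreover, the interpolated drift is generally not the memoryless field of any terminal law, so the terminal-tilt argument does not apply to it. Keep the geometric-mean argument of Step 1; that is the one that actually supports the theorem.
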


The full proof is provided in Appendix \ref{sec:appendix_proof_optimal}. Crucially, because $\lambda(s), \eta(s) > 0$, the exponent applied to the base prior, $\delta(s) = \frac{\lambda(s) \eta(s)}{1 + \lambda(s) \eta(s)}$, is bounded between $0$ and $1$ across the entire state space. This formally guarantees that ME-AM flattens the peaked behavior density, exponentially decoupling the agent's exploration from empirical data frequencies and achieving the exact tempered exploration required to bypass popularity bias. Note that Theorem \ref{thm:optimal_policy} assumes access to the exact analytical score. We formally characterize the optimal stationary policy under our practical, Gaussian-smoothed approximation in Appendix \ref{sec:appendix_smoothed_policy}.

\section{Experiments and Results}
\label{sec:experiments}

\input{full_results_agg.tex}

We conduct experiments to evaluate the effectiveness of our method on long-horizon, sparse-reward domains and compare it against a set of representative baselines.

\textbf{Domains and datasets.} We follow an experimental setup similar to that employed by \citet{li2026qlearning}. We focus on two challenging domains from OGBench~\citep{park2025ogbench}: \texttt{puzzle-4x4} (\texttt{p44}) and \texttt{cube-triple} (\texttt{ct}). For \texttt{cube-triple}, we utilize the default \texttt{play} dataset. For \texttt{puzzle-4x4}, we employ the larger 100M-size dataset from \citet{park2025horizon} and adopt the sparse reward definition following \citet{li2025reinforcement}. Both domains require the agent to solve complex tasks from diverse offline behavior data that can only be accurately captured by expressive generative models. Furthermore, following \citet{li2025reinforcement}, we learn action chunking policies with a chunk size of $h=5$. Because action chunking outputs high-dimensional actions exhibiting complex behavior distributions, robust policy extraction becomes critical, making these domains an ideal testbed for ME-AM. Detailed domain specifications are provided in \cref{appendix:domains}.

\textbf{Comparisons.} We benchmark ME-AM against a suite of  SOTA algorithms. Our comparisons include standard Adjoint Matching frameworks (QAM \citep{li2026qlearning}, QAM-E, QAM-F, BAM), action-gradient diffusion models (CGQL, CGQL-L, CGQL-M, DAC \citep{fang2025diffusion}, QSM \citep{qsm_psenka2024}), backpropagation-based flow models (FQL, FBRAC \citep{park2025flow}, FEdit \citep{dong2026expo}), value-weighted regression and post-processing methods (IFQL \citep{hansen2023idql}, FAWAC \citep{park2025flow}, DSRL \citep{wagenmaker2025steering}), and the standard continuous-control baseline ReBRAC \citep{tarasov2024revisiting}. Finally, to evaluate sample efficiency during offline-to-online fine-tuning, we exclusively add RLPD \citep{ball2023efficient}, which is evaluated only in the online phase.

\textbf{Results.} As shown in Table~\ref{tab:agg-sparse-results}, ME-AM significantly outperforms prior  SOTA methods on aggregate performance. To understand the specific mechanisms driving this performance, the remainder of this section is designed to answer the following research questions.

\begin{figure}[t]
  \centering
  % RQ1: Geometric Expansion
  \begin{subfigure}[b]{0.24\textwidth}
    \centering
    \includegraphics[width=\linewidth]{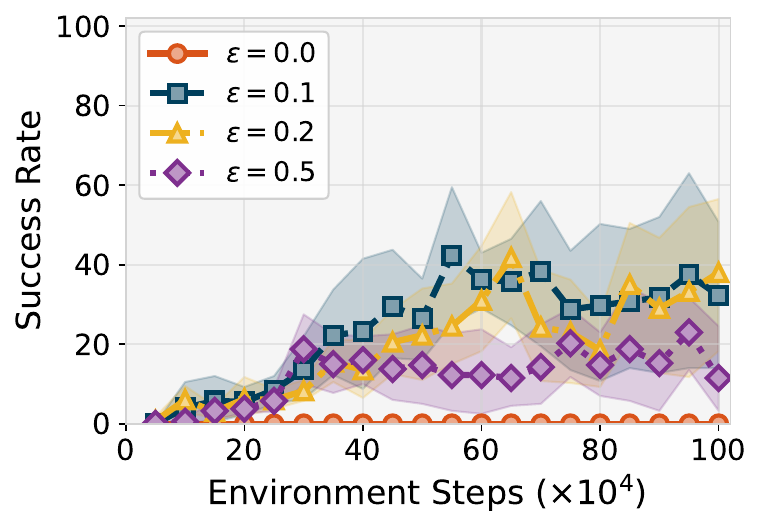}
    \caption{Mixture probability $\epsilon$}
    \label{fig:rq1_ablations}
  \end{subfigure}\hfill
  % RQ2: KL Regularization (contains two side-by-side plots)
  \begin{subfigure}[b]{0.48\textwidth}
    \centering
    \includegraphics[width=0.48\linewidth]{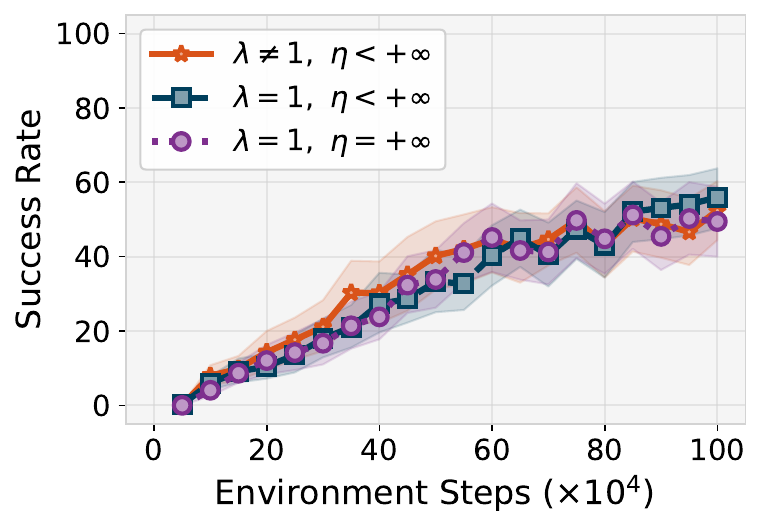}\hfill
    \includegraphics[width=0.48\linewidth]{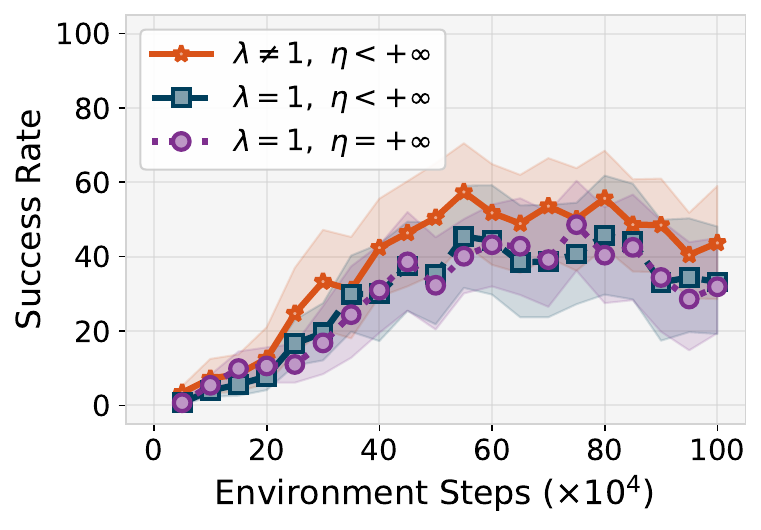}
    \caption{KL path regularization}
    \label{fig:rq2_ablations}
  \end{subfigure}\hfill
  % RQ3: Noise Floor
  \begin{subfigure}[b]{0.24\textwidth}
    \centering
    \includegraphics[width=\linewidth]{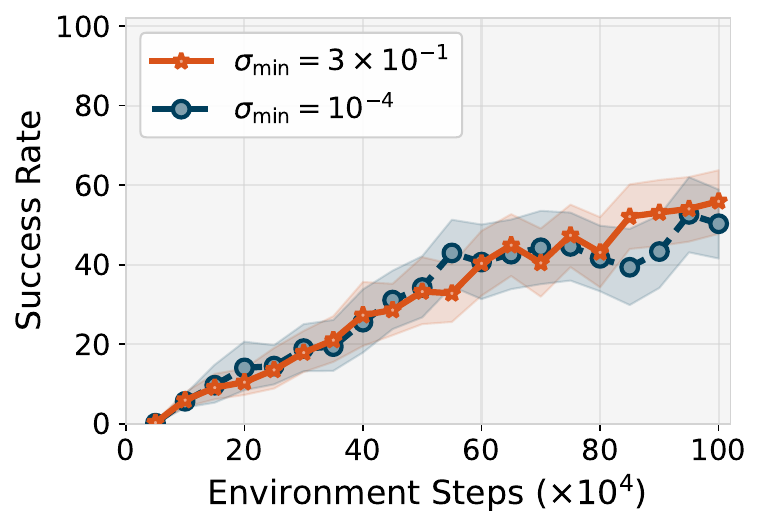}
    \caption{Noise floor $\sigma_{\min}$}
    \label{fig:rq3_ablations}
  \end{subfigure}
  \vspace{-5pt}
  \caption{\textbf{Ablation Studies on \texttt{p44}.} (a) Geometric expansion via Mixture Prior (tasks 2 and 4, 4 seeds). (b) Explicit KL path constraint in standard and noisy settings (5 tasks, 8 seeds). (c) Evaluation noise floor $\sigma_{\min}$ (5 tasks, 8 seeds).}
  \label{fig:all_ablations}
  \vspace{-10pt}
\end{figure}

\textbf{RQ1: How important is the geometric expansion?} As shown in Figure~\ref{fig:all_ablations}a, geometric expansion is essential in the \texttt{p44} domain, where relying on the behavior support ($\epsilon = 0$) causes a total performance collapse. Using moderate expansion values ($\epsilon \in \{0.1, 0.2\}$) safely bridges structural voids and yields clear performance improvements over the QAM-E baseline. However, applying excessively large values ($\epsilon = 0.5$) degrades performance, suggesting that the expanded prior becomes overly unimodal and washes out essential geometric details.

\textbf{RQ2: What is the role of explicit KL path regularization in continuous Mirror Descent?} As shown in Figure~\ref{fig:all_ablations}b (left), the variant enforcing an explicit KL constraint ($\lambda \neq 1, \eta < \infty$) slightly underperforms the pure density-flattening ablation ($\lambda=1, \eta < \infty$). This observation motivated the ablation in Figure~\ref{fig:all_ablations}b (right), where instead of deterministically utilizing the action mean ($\mu_\omega$) during geometric expansion, we inject stochasticity by sampling random actions from $\pi_\omega$. In this noisier setting, the KL constraint proves essential for stabilizing the training dynamics, a result that aligns with recent findings on verifier-constrained entropy maximization \citep{deverifier}.

\textbf{RQ3: How does density flattening interact with the noise floor ($\sigma_{\min}$)?} As shown in Figure~\ref{fig:all_ablations}c for the \texttt{p44} domain, we found that using a larger evaluation noise floor ($\sigma_{\min} = 0.3$) leads to improved performance compared to using a standard small noise level ($10^{-4}$). We believe this suggests that the increased noise floor acts as a spatial smoothing factor, which can be useful for stabilizing optimization in certain tasks. We provide a formal justification of this smoothing phenomenon in Appendix~\ref{sec:appendix_smoothed_policy}.

\begin{figure}[t]

 \centering
 \includegraphics[width=0.8\columnwidth]{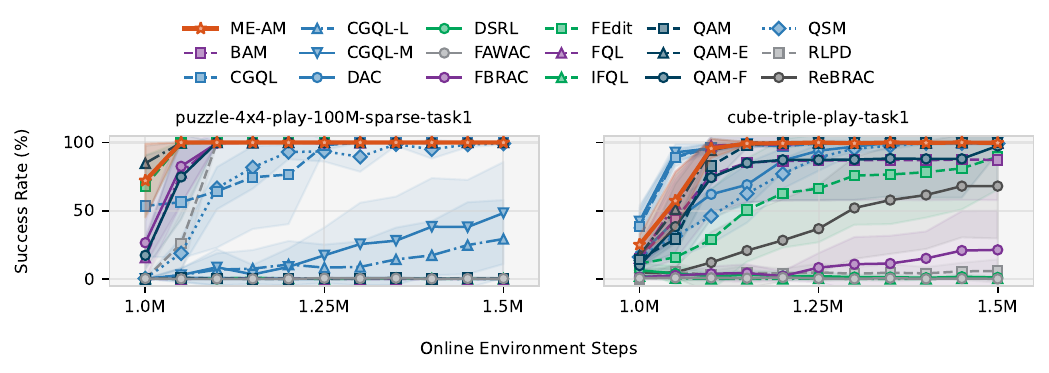}
\caption{\textbf{Offline-to-Online Fine-Tuning Learning Curves (8 seeds).} ME-AM significantly improves upon vanilla QAM.}
 \vspace{-10pt}
 \label{fig:online_finetuning}
\end{figure}

\textbf{RQ4: How effective is our method in offline-to-online fine-tuning?} As shown in Figure~\ref{fig:online_finetuning}, our method significantly improves online performance compared to vanilla QAM, achieving performance on par with QAM-edit, while eliminating the need for an additional edit policy for online exploration.

\section{Conclusion}
\label{sec:conclusion}

In this work, we introduced Maximum Entropy Adjoint Matching (ME-AM) to resolve the Support-Binding dilemma in offline RL by extending the valid action manifold and flattening the empirical behavioral density. While ME-AM yields strong performance on sparse-reward benchmarks and maintains fast inference speeds, the auxiliary score estimation and manifold expansion mechanisms introduce a noticeable training-time computational overhead. To address this limitation, future work could explore dynamic scheduling techniques, applying entropy regularization primarily during later training stages. Furthermore, the framework could be structurally simplified by adopting generative reparameterizations that yield the score natively, or by reframing the adjoint objective with flexible distributional constraints, such as Wasserstein distances, to organically encompass out-of-distribution modes without the need for a separate expansion network.

\bibliographystyle{plainnat}
\bibliography{references}

\newpage
\appendix

\section{Extended Background and Methodological Details}
\label{sec:appendix_extended_background}

Following the discussion in Section \ref{sec:preliminaries} regarding the numerical fragility of the score function near the terminal boundary ($t \to 1$), we detail the algebraic mechanics of this singularity. Let $p^{*, \text{fine}}_t(\cdot|s)$ denote the exact theoretical marginal density of the state-conditioned flow, representing the true analytical target of our approximate density $p^{\text{fine}}_t(\cdot|s)$. Similarly, let $v^{*, \text{fine}} : \mathbb{R}^{d_a} \times [0,1] \times \mathcal{S} \to \mathbb{R}^{d_a}$ denote its corresponding optimal marginal vector field, which the neural network $v_{\theta_{\text{fine}}}$ aims to approximate. Under the independent linear path $X_t = (1-t)X_0 + t X_1$ utilized in our formulation, the base noise $X_0 \sim \mathcal{N}(0, I_{d_a})$ and the target data $X_1$ are uncoupled. As established in the literature \cite{domingo-enrich2025adjoint, holderrieth2025introductionflowmatchingdiffusion, de2025flow}, the analytical nominal score function is exactly given by:
\begin{equation}
 \nabla_{x_t} \log p^{*, \text{fine}}_t(x_t|s) = \frac{t \cdot v^{*, \text{fine}}(x_t, t|s) - x_t}{1-t}.
 \label{eq:score_singularity}
\end{equation}

Analytically, this uncoupled formulation ensures the optimal marginal velocity field at the terminal boundary evaluates to $v^{*, \text{fine}}(x_1, 1|s) = \mathbb{E}[X_1 - X_0 \mid X_1 = x_1] = x_1 - \mathbb{E}[X_0] = x_1$. Therefore, for the exact generative process, the numerator naturally decays at a rate of $\mathcal{O}(1-t)$ as $t \to 1$, offsetting the vanishing denominator and ensuring that Equation \ref{eq:score_singularity} evaluates to a well-defined limit via L'Hôpital's rule.

In practice, replacing the optimal vector field with a parametric neural approximation $v_{\theta_{\text{fine}}}(x_t, t|s)$ introduces an error $e(x_t, t) = v_{\theta_{\text{fine}}}(x_t, t|s) - v^{*, \text{fine}}(x_t, t|s)$. The empirical score estimate becomes:
\begin{equation}
 \nabla_{x_t} \log p^{\text{fine}}_t(x_t|s) \approx \frac{t \cdot \big(v^{*, \text{fine}}(x_t, t|s) + e(x_t, t)\big) - x_t}{1-t} = \nabla_{x_t} \log p^{*, \text{fine}}_t(x_t|s) + \frac{t \cdot e(x_t, t)}{1-t}.
\end{equation}

Unconstrained neural networks do not naturally enforce the $\mathcal{O}(1-t)$ boundary decay required to cancel the denominator \cite{hu2025improving}. Consequently, the residual term $\frac{t \cdot e(x_t, t)}{1-t}$ asymptotically diverges as $t \to 1$. This instability is particularly acute in offline RL: during the initial phase of actor-critic training, the weights $\theta_{\text{fine}}$ are poorly initialized, exacerbating the approximation error and causing the score to explode. Injecting these unbounded gradients into the Adjoint ODE terminal condition ($\tilde{y}(x_1, 1)$) immediately destabilizes the backward pass.

Recent work \cite{hu2025improving} proposes mitigating this specific boundary violation via a subtraction-based reparameterization, which under our settings takes the form: $\tilde{v}_{\theta_{\text{fine}}}(x_t, t|s) = x_t + v_{\theta_{\text{fine}}}(x_t, t|s) - v_{\theta_{\text{fine}}}(x_t, 1|s)$. While this construction explicitly enforces the theoretical terminal condition $\tilde{v}_{\theta_{\text{fine}}}(x_1, 1|s) = x_1$ and stabilizes the analytical limit, it introduces additional computational overhead during inference, as evaluating the constrained vector field requires two distinct forward passes (at $t$ and $t=1$) for every single integration step. This structural limitation necessitates our alternative decoupled score parameterization detailed in Section \ref{sec:method}.

\section{Theoretical Analysis and Proofs}
\label{sec:appendix_theoretical_analysis}
The majority of the mathematical justifications presented in this section are direct results from \cite{domingo-enrich2025adjoint}, provided here to ensure the paper remains self-contained.
\subsection{Equivalence of KL Formulations}
\label{sec:appendix_soc}

We establish the equivalence between the KL divergence formulation used in Eq.~\ref{eq:girsanov_kl} and the control-based expression presented in \cite{domingo-enrich2025adjoint}. The latter expresses the KL divergence between path measures in terms of a quadratic control cost.

We consider a control-affine stochastic differential equation of the form
\begin{equation}
 dX_t = \left( b_{\text{base}}(X_t,t) + g_t\, u(X_t,t) \right) dt + g_t\, dW_t,
\end{equation}
where $X_t \in \mathbb{R}^{d_a}$, $b_{\text{base}} : \mathbb{R}^{d_a} \times [0,1] \to \mathbb{R}^{d_a}$ is the base drift, $u : \mathbb{R}^{d_a} \times [0,1] \to \mathbb{R}^{d_a}$ is a control signal, $g_t = \sqrt{2(1-t)/t}$ is the memoryless noise schedule \cite{domingo-enrich2025adjoint}, and $(W_t)$ is a standard $d_a$-dimensional Brownian motion.

Let $\mathbb{P}_{\text{base}}$ and $\mathbb{P}_{\text{fine}}$ denote the path measures induced by the uncontrolled ($u=0$) and controlled processes, respectively. Under this formulation, the KL divergence between these measures is given by the quadratic control cost \cite{domingo-enrich2025adjoint}:
\begin{equation}
 D_{\mathrm{KL}}(\mathbb{P}_{\text{fine}} \parallel \mathbb{P}_{\text{base}}) = \frac{1}{2} \mathbb{E}_{\boldsymbol{X} \sim \mathbb{P}_{\text{fine}}} \left[ \int_0^1 \|u(X_t,t)\|^2 dt \right].
\end{equation}

In our setting, the dynamics are parameterized via a vector field $v_\theta$. Under the memoryless flow matching construction, the corresponding drift simplifies to
\begin{equation}
 b(X_t,t) = 2 v_\theta(X_t,t) - \frac{1}{t} X_t,
\end{equation}
which induces the base and fine-tuned drifts
\begin{equation}
 b_{\text{base}}(X_t,t) = 2 v_{\theta_{\text{base}}}(X_t,t) - \frac{1}{t} X_t, \quad
 b_{\text{fine}}(X_t,t) = 2 v_{\theta_{\text{fine}}}(X_t,t) - \frac{1}{t} X_t.
\end{equation}

To match the controlled process to the fine-tuned dynamics, we equate the drifts:
\begin{equation}
 b_{\text{base}}(X_t,t) + g_t u(X_t,t) = b_{\text{fine}}(X_t,t).
\end{equation}
Canceling the shared $-\frac{1}{t}X_t$ term yields the control signal
\begin{equation}
 u(X_t,t) = \frac{2}{g_t}\left(v_{\theta_{\text{fine}}}(X_t,t) - v_{\theta_{\text{base}}}(X_t,t)\right).
\end{equation}

Substituting this expression into the quadratic control cost gives
\begin{equation}
 \frac{1}{2} \mathbb{E} \left[ \int_0^1 \left\| \frac{2}{g_t}\left(v_{\theta_{\text{fine}}} - v_{\theta_{\text{base}}}\right) \right\|^2 dt \right]
 = \mathbb{E} \left[ \int_0^1 \frac{2}{g_t^2} \left\| v_{\theta_{\text{fine}}} - v_{\theta_{\text{base}}} \right\|^2 dt \right],
\end{equation}
which recovers the KL expression used in Eq.~\ref{eq:girsanov_kl}. This establishes the equivalence between the control-based formulation of \cite{domingo-enrich2025adjoint} and the vector-field-based formulation used in our method.

\subsection{Proof of Proposition \ref{prop:dual_kl_equivalence}: Equivalence of Convex KL Path Costs}
\label{sec:appendix_proof_kl}

\textit{Notation Convention:} For the subsequent proofs, we adhere to the notational conventions established in Section \ref{sec:method_maxent}. Specifically, the terminal flow state is equated with the RL action ($x_1 \equiv a$), and any conditional vector field $v_f$ implicitly defines a terminal policy $\pi^f$ and a conditional path measure $\mathbb{P}_f$. Furthermore, the parameters $\beta(s)$, $\eta(s)$, and $\lambda(s)$ are treated as state-dependent functions to ensure global optimality across the MDP.

As stated in Section \ref{sec:method_maxent}, the continuous-time SOC framework allows us to analytically unify the convex combination of two KL path constraints into a single regression target. Here, we provide the formal algebraic proof.

\begin{proof}
Let the path regularization $\mathcal{R}_{\mathrm{path}}(\mathbb{P}_k)$ from Equation \ref{eq:local_objective} be defined by the convex combination of the two continuous-time KL divergences. As established in Equation \ref{eq:girsanov_kl} (and detailed in Appendix \ref{sec:appendix_soc}), the KL divergence between path measures maps exactly to the expected $L_2$ distance between their respective vector fields scaled by $2/g_t^2$.

Let $\mathcal{L}_{\mathrm{penalty}}(v_k)$ denote the instantaneous integrand of this combined path cost at time $t$:
\begin{equation}
 \mathcal{L}_{\mathrm{penalty}}(v_k) = \frac{2}{g_t^2} \left[ (1-\lambda(s)) \|v_k - v_{k-1}\|^2 + \lambda(s) \|v_k - v_{\mathrm{mix}}\|^2 \right].
\end{equation}

Factoring out the shared $2/g_t^2$ multiplier, we expand the quadratic terms:
\begin{align}
 \frac{g_t^2}{2} \mathcal{L}_{\mathrm{penalty}}(v_k) &= (1-\lambda(s)) \left( \|v_k\|^2 - 2 v_k^\top v_{k-1} + \|v_{k-1}\|^2 \right) \nonumber \\
 &\quad + \lambda(s) \left( \|v_k\|^2 - 2 v_k^\top v_{\mathrm{mix}} + \|v_{\mathrm{mix}}\|^2 \right).
\end{align}

Grouping the terms by the active variable $v_k$, and noting that $(1-\lambda(s)) + \lambda(s) = 1$, we obtain:
\begin{equation}
 = \|v_k\|^2 - 2 v_k^\top \left( \lambda(s) v_{\mathrm{mix}} + (1-\lambda(s)) v_{k-1} \right) + \lambda(s) \|v_{\mathrm{mix}}\|^2 + (1-\lambda(s)) \|v_{k-1}\|^2.
\end{equation}

Substituting the definition of the interpolated reference flow, $v_{\mathrm{ref}} = \lambda(s) v_{\mathrm{mix}} + (1-\lambda(s)) v_{k-1}$ (Equation \ref{eq:interpolated_v_ref}), and completing the square yields:
\begin{equation}
 = \|v_k\|^2 - 2 v_k^\top v_{\mathrm{ref}} + C' = \|v_k - v_{\mathrm{ref}}\|^2 + C.
\end{equation}

Here, the residual $C = C' - \|v_{\mathrm{ref}}\|^2$ depends exclusively on the fixed anchors ($v_{\mathrm{mix}}$ and $v_{k-1}$) and the state-dependent hyperparameter $\lambda(s)$. Multiplying the scale factor back in, the total cost simplifies to:
\begin{equation}
 \mathcal{L}_{\mathrm{penalty}}(v_k) = \frac{2}{g_t^2} \|v_k - v_{\mathrm{ref}}\|^2 + \frac{2}{g_t^2} C.
\end{equation}

Because the functional derivative of the residual with respect to the active policy is exactly zero ($\frac{\delta C}{\delta v_k} = 0$), optimizing $v_k$ against the two separate convex anchors is mathematically identical to solving a single optimal control problem against the interpolated flow $v_{\mathrm{ref}}$ with a total path temperature of 1.
\end{proof}

\subsection{Proof of Theorem \ref{thm:optimal_policy}: Optimal Stationary Policy}
\label{sec:appendix_proof_optimal}

\begin{proof}
We begin by expanding the KL divergences in the local objective (Equation \ref{eq:local_objective}) and incorporating the strict density integration constraint via a Lagrange multiplier $\nu$:
\begin{align}
 \mathcal{L}(\pi_k, \nu) &= \int_{\mathcal{A}} \pi_k(a|s) \left[ \frac{1}{\beta(s)} Q_\phi(s,a) - \frac{1}{\eta(s)} \log \pi_{k-1}(a|s) \right. \nonumber \\
 &\quad \left. - (1-\lambda(s)) \left( \log \pi_k(a|s) - \log \pi_{k-1}(a|s) \right) \right. \nonumber \\
 &\quad \left. - \lambda(s) \left( \log \pi_k(a|s) - \log \pi_{\mathrm{mix}}(a|s) \right) \right] \mathrm{d}a + \nu \left( 1 - \int_{\mathcal{A}} \pi_k(a|s) \mathrm{d}a \right).
\end{align}

Taking the functional derivative with respect to the density $\pi_k(a|s)$ and setting it to zero yields the optimality condition for iteration $k$. Noting that the derivative of the terms $-(1-\lambda(s))\pi_k \log \pi_k - \lambda(s) \pi_k \log \pi_k$ collapses perfectly to $-(1 + \log \pi_k)$, we obtain:
\begin{equation}
 \begin{split}
 \frac{1}{\beta(s)} Q_\phi(s,a) &- \frac{1}{\eta(s)} \log \pi_{k-1}(a|s) - \log \pi_k(a|s) - 1 \\
 &+ (1-\lambda(s)) \log \pi_{k-1}(a|s) + \lambda(s) \log \pi_{\mathrm{mix}}(a|s) - \nu = 0.
 \end{split}
\end{equation}

Rearranging to isolate the active variable $\log \pi_k(a|s)$ gives:
\begin{equation}
 \log \pi_k(a|s) = \frac{1}{\beta(s)} Q_\phi(s,a) - \left( \frac{1}{\eta(s)} - 1 + \lambda(s) \right) \log \pi_{k-1}(a|s) + \lambda(s) \log \pi_{\mathrm{mix}}(a|s) - (1 + \nu).
\end{equation}

At the stationary point of the amortized MD procedure, the sequence of active policies converges to the anchor policy such that $\pi_k \to \pi_{k-1} \equiv \pi^*$. Letting the constant $C = 1 + \nu$, we substitute $\pi^*$ into the optimality condition and group the $\log \pi^*(a|s)$ terms on the left side:
\begin{equation}
 \begin{split}
 \left( 1 + \frac{1}{\eta(s)} - 1 + \lambda(s) \right) \log \pi^*(a|s) &= \left( \frac{1}{\eta(s)} + \lambda(s) \right) \log \pi^*(a|s) \\
 &= \frac{1}{\beta(s)} Q_\phi(s,a) + \lambda(s) \log \pi_{\mathrm{mix}}(a|s) - C.
 \end{split}
\end{equation}

Finally, multiplying the entire equation by $\frac{\eta(s)}{1 + \lambda(s) \eta(s)}$ and exponentiating yields the optimal stationary distribution:
\begin{equation}
 \pi^*(a|s) = \frac{1}{Z(s)} \left( \pi_{\mathrm{mix}}(a|s) \right)^{\frac{\lambda(s) \eta(s)}{1 + \lambda(s) \eta(s)}} \exp \left( \frac{\eta(s)}{\beta(s)(1 + \lambda(s) \eta(s))} Q_\phi(s,a) \right),
\end{equation}
where $Z(s) = \exp \left( \frac{C \eta(s)}{1 + \lambda(s) \eta(s)} \right)$ is the partition function ensuring the density integrates to 1.
\end{proof}

\textit{Remark.} The closed-form stationary policy isolates the mechanical impact of the local objective's hyperparameters. The base prior is flattened by the exponent $\delta(s) = \frac{\lambda(s) \eta(s)}{1 + \lambda(s) \eta(s)} \in (0, 1)$, while the resulting policy greediness is dictated by an effective reward temperature $\kappa(s) = \frac{\beta(s)(1 + \lambda(s) \eta(s))}{\eta(s)}$. Additionally, the path weight $(1-\lambda(s))$ acts as a continuous-time trust region, bounding the update step from the anchor $\pi_{k-1}$ to stabilize iterative fine-tuning against value overestimation.

\subsection{Optimal Policy with Smoothed Entropy Maximization}
\label{sec:appendix_smoothed_policy}

As noted in Section \ref{sec:method_score}, our variance-weighted score estimation queries the auxiliary network at a positive noise floor $\sigma_{\min}$. This effectively estimates the score of a Gaussian-smoothed Kernel Density Estimate (KDE) \citet{vincent2011connection}. Here, we formalize the optimal stationary policy under this smoothed objective.

\begin{proposition}[Optimal Policy with Smoothed Entropy]
\label{prop:smoothed_entropy}
Given the local fine-tuning objective $\mathcal{J}_{\mathrm{local}}(\pi_k)$, suppose we replace the exact target density $\pi_{k-1}$ with its Gaussian-smoothed KDE, $\pi^\sigma_{k-1}(a|s) = (\pi_{k-1} * \mathcal{N}(0, \sigma_{\min}^2 I_{d_a}))(a|s)$. Assume the sequence of active policies converges to the anchor policy ($\pi_k \to \pi_{k-1} \equiv \pi^*$). The optimal stationary policy $\pi^*(a|s)$ satisfies the implicit distribution:
\begin{equation}
 \pi^*(a|s) = \frac{1}{Z_\sigma(s)} \pi_{\mathrm{mix}}(a|s) \exp \left( \frac{1}{\lambda(s) \beta(s)} Q_\phi(s,a) \right) \left[ \pi^{\sigma, *}(a|s) \right]^{-\frac{1}{\lambda(s) \eta(s)}},
\end{equation}
where $\pi^{\sigma, *}$ is the Gaussian-smoothed stationary policy, and $Z_\sigma(s)$ is the state-dependent partition function.
\end{proposition}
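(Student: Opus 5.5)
The plan is to rerun the Lagrangian argument from the proof of Theorem~\ref{thm:optimal_policy}, replacing only the entropy term. In $\mathcal{J}_{\mathrm{local}}(\pi_k)$ we substitute $-\frac{1}{\eta(s)}\log \pi^\sigma_{k-1}(a|s)$ for $-\frac{1}{\eta(s)}\log \pi_{k-1}(a|s)$.

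\textbf{Key observation.} Mirror Descent linearizes the entropy around the anchor, so $\pi^\sigma_{k-1}$ is a fixed function of $a$ at iteration $k$. It does not depend on the optimization variable $\pi_k$. Hence its contribution to the functional derivative is just $-\frac{1}{\eta(s)}\log\pi^\sigma_{k-1}(a|s)$, exactly as in the unsmoothed case. The path regularization is unchanged, so the same closed-form KL expansion applies. This expansion is justified by Eq.~\ref{eq:girsanov_kl} and Proposition~\ref{prop:dual_kl_equivalence}.

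\textbf{Step 1: first-order condition at iteration $k$.} We form the Lagrangian with the normalization multiplier $\nu$ and differentiate in $\pi_k$. As before, the $\pi_k\log\pi_k$ terms contribute $-(1+\log\pi_k)$. With $C=1+\nu$, this gives
\begin{equation*}
\log\pi_k = \frac{1}{\beta(s)}Q_\phi - \frac{1}{\eta(s)}\log\pi^\sigma_{k-1} + (1-\lambda(s))\log\pi_{k-1} + \lambda(s)\log\pi_{\mathrm{mix}} - C.
\end{equation*}

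\textbf{Step 2: impose stationarity.} We set $\pi_k=\pi_{k-1}=\pi^*$, so that $\pi^\sigma_{k-1}=\pi^{\sigma,*}=\pi^* * \mathcal{N}(0,\sigma_{\min}^2 I_{d_a})$. Now the structural difference from Theorem~\ref{thm:optimal_policy} appears. The entropy term involves $\pi^{\sigma,*}$ rather than $\pi^*$, so it cannot be merged with the $\log\pi^*$ terms. Only the $(1-\lambda)\log\pi^*$ term moves to the left, which leaves
\begin{equation*}
\lambda(s)\log\pi^* = \frac{1}{\beta(s)}Q_\phi + \lambda(s)\log\pi_{\mathrm{mix}} - \frac{1}{\eta(s)}\log\pi^{\sigma,*} - C.
\end{equation*}
Dividing by $\lambda(s)>0$ and exponentiating gives the claimed form, with $Z_\sigma(s)=\exp(C/\lambda(s))$ absorbing the constant.

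\textbf{Main obstacle.} The difficulty is justifying that the result is only an implicit characterization. Because $\pi^{\sigma,*}$ is the convolution of $\pi^*$ itself, the equation is a fixed-point relation rather than a closed form, and the statement should be presented that way. Two sanity checks are worth adding. First, as $\sigma_{\min}\to 0$ we have $\pi^{\sigma,*}\to\pi^*$. Moving $-\frac{1}{\lambda\eta}\log\pi^*$ to the left and rescaling by $\frac{\lambda\eta}{1+\lambda\eta}$ then recovers Theorem~\ref{thm:optimal_policy}. Second, $Z_\sigma(s)$ is finite if, for instance, $Q_\phi$ is bounded, since $\pi^{\sigma,*}$ is smooth and strictly positive. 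Existence and uniqueness of the fixed point are assumed through the convergence hypothesis and are not proved here.
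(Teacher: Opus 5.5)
Your proposal matches the paper's proof: you write the same first-order condition with $\log\pi^\sigma_{k-1}$ treated as a fixed function, substitute $\pi_k=\pi_{k-1}=\pi^*$ so that only $\lambda(s)\log\pi^*$ remains on the left, then divide by $\lambda(s)$ and exponentiate, giving $Z_\sigma(s)=\exp(C/\lambda(s))$. One small caveat concerns your side remark on $Z_\sigma(s)$: the exponent on $\pi^{\sigma,*}$ is negative, so smoothness and strict positivity alone do not make $Z_\sigma(s)$ finite, and you also need $\pi^{\sigma,*}$ to be bounded below on the support of $\pi_{\mathrm{mix}}$ (for example, via compactness of the clipped action space).
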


\begin{proof}
Following the variational derivation from Appendix \ref{sec:appendix_proof_optimal}, replacing the exact entropy penalty with the smoothed approximation modifies the optimality condition. Setting the functional derivative of the modified Lagrangian to zero and rearranging for the active variable $\log \pi_k(a|s)$ yields:
\begin{equation}
 \log \pi_k(a|s) - (1-\lambda(s)) \log \pi_{k-1}(a|s) = \frac{1}{\beta(s)} Q_\phi(s,a) - \frac{1}{\eta(s)} \log \pi^\sigma_{k-1}(a|s) + \lambda(s) \log \pi_{\mathrm{mix}}(a|s) - C,
\end{equation}
where $C$ incorporates the Lagrange multiplier for the integration constraint. 

At the stationary point, the sequence of active policies converges to the anchor ($\pi_k \to \pi_{k-1} \equiv \pi^*$), and consequently, the smoothed density converges to the smoothed stationary density ($\pi^\sigma_{k-1} \to \pi^{\sigma, *}$). Substituting these limits into the optimality condition yields:
\begin{equation}
 \lambda(s) \log \pi^*(a|s) = \frac{1}{\beta(s)} Q_\phi(s,a) - \frac{1}{\eta(s)} \log \pi^{\sigma, *}(a|s) + \lambda(s) \log \pi_{\mathrm{mix}}(a|s) - C.
\end{equation}

Dividing the entire equation by $\lambda(s)$ and exponentiating isolates $\pi^*(a|s)$, yielding the implicit optimal stationary distribution.
\end{proof}

\textit{Remark.} Exact score matching on finite empirical data typically yields unbounded gradients as $\sigma \to 0$. By explicitly targeting the Gaussian-smoothed policy $\pi^\sigma_{k-1}$, the resulting stationary distribution incorporates the inverse term $\left[ \pi^{\sigma, *}(a|s) \right]^{-1/\lambda(s) \eta(s)}$. This establishes an inverse dependence on the smoothed spatial density, providing a regularizing effect that reduces the policy's sensitivity to isolated empirical data points.

\section{ME-AM Pseudocode}
\label{sec:appendix_algorithm}

In this section, we provide the streamlined pseudocode for ME-AM, summarizing the core training loop and the integration of our geometric expansion and density flattening mechanisms.

\begin{algorithm}[H]
 \caption{Maximum Entropy Adjoint Matching (ME-AM)}
 \label{alg:me_am}
\begin{algorithmic}[1]
 \REQUIRE Dataset $\mathcal{D}$, Mixture fraction $\epsilon$, Interpolation weight $\lambda$, Scales $\beta, \eta$, Noise floor $\sigma_{\min}$, Target network update rate $\rho \in (0,1)$.
 \STATE \textbf{Initialize:} Critic $Q_\phi$, Base flow $v_{\theta_{\text{base}}}$, Fine-tuned flow $v_{\theta_{\text{fine}}}$, Target flow $v_{\theta_{\text{target}}}$, Expansion actor $\pi_\omega$, Score network $S_\psi$.
 \FOR{each training iteration}
  \STATE \textbf{1. Critic \& Expansion Updates:}
  \STATE Update critic $Q_\phi$ via standard TD-learning.
  \STATE Update expansion actor $\pi_\omega$ via SAC-style objective to maximize $Q_\phi(s, a)$.
  
  \STATE \textbf{2. Mixture Base Flow Update (Geometric Expansion):}
  \STATE Sample $(s, a_{\text{data}}) \sim \mathcal{D}$ and $u \sim \mathcal{U}(0,1)$.
  \STATE Target $x_1 \leftarrow \mu_\omega(s, a_{\text{data}})$ if $u < \epsilon$ else $a_{\text{data}}$.
  \STATE Update $\theta_{\text{base}}$ via Conditional Flow Matching (\Cref{CFM_loss}) towards $x_1$.
  
  \STATE \textbf{3. Auxiliary Score Update (Density Flattening):}
  \STATE Sample terminal actions $a$ using target flow $v_{\theta_{\text{target}}}$.
  \STATE Update score network $S_\psi$ via variance-weighted denoising (\Cref{eq:variance_weighted_score}).
  
  \STATE \textbf{4. Adjoint Matching Update (Unified Objective):}
  \STATE Interpolate reference flow: $v_{\text{ref}} \coloneqq \lambda v_{\theta_{\text{base}}} + (1-\lambda) v_{\theta_{\text{target}}}$.
  \STATE Simulate forward trajectory $x_1 \leftarrow \text{ODESolve}(x_0, v_{\text{ref}}, t \in [0,1])$.
  \STATE Compute MD boundary condition: $\tilde{y}(x_1, 1) \leftarrow - \frac{1}{\beta} \nabla_{x_1} Q_\phi(s, x_1) + \frac{1}{\eta \sigma_{\min}} S_\psi(x_1, \sigma_{\min} \mid s)$.
  \STATE Solve Adjoint ODE backward: $\{\tilde{y}(x_t, t)\}_{t=0}^1 \leftarrow \text{ODESolve}(\tilde{y}(x_1, 1), \text{Eq. \ref{eq:lean_adjoint_ode}}, t \in [1,0])$.
  \STATE Update $\theta_{\text{fine}}$ via Adjoint Matching regression (\Cref{eq:am_loss}) against targets $\tilde{y}(x_t, t)$.
  
  \STATE \textbf{5. Target Update:}
  \STATE $\theta_{\text{target}} \leftarrow \rho \theta_{\text{target}} + (1-\rho) \theta_{\text{fine}}$ (and similarly update target critic).
 \ENDFOR
\end{algorithmic}
\end{algorithm}

\section{Experimental Details}
\label{appendix:experimental_details}

In this section, we provide the detailed settings and configurations utilized in our experimental evaluations.

\subsection{Domain Details}
\label{appendix:domains}

We evaluate our method across two challenging domains from OGBench \citep{park2025ogbench}. These specific tasks were selected to cover offline RL challenges characterized by precise manipulation and severe reward sparsity. The dataset size, episode length, and action dimension for each domain are detailed in Table \ref{tab:metadata}. 

All experiments for ME-AM are implemented in JAX \citep{jax2018github} and executed on a high-performance computing cluster utilizing NVIDIA A100 GPUs. For each ME-AM evaluation, we run 8 independent seeds. All plots and tables report the mean success rates with 95\% confidence intervals computed via bootstrapping.

\begin{table}[ht]
  \centering
  \begin{tabular}{@{}cccc@{}}
   \toprule
   \textbf{Tasks} & \textbf{Dataset Size} & \textbf{Episode Length} & \textbf{Action Dimension ($A$)} \\
   \midrule
   \texttt{cube-triple-*} & $3$M & $1000$ & $5$\\
   \texttt{puzzle-4x4-100M-sparse-*} & $100$M & $500$ & $5$\\
  \bottomrule
  \end{tabular}
  \vspace{1mm}
  \caption{\textbf{Domain metadata.} Summary of the selected OGBench domains used for evaluation.}
  \label{tab:metadata}
\end{table}

\begin{figure}[ht]
  \centering
  \includegraphics[width=0.48\linewidth]{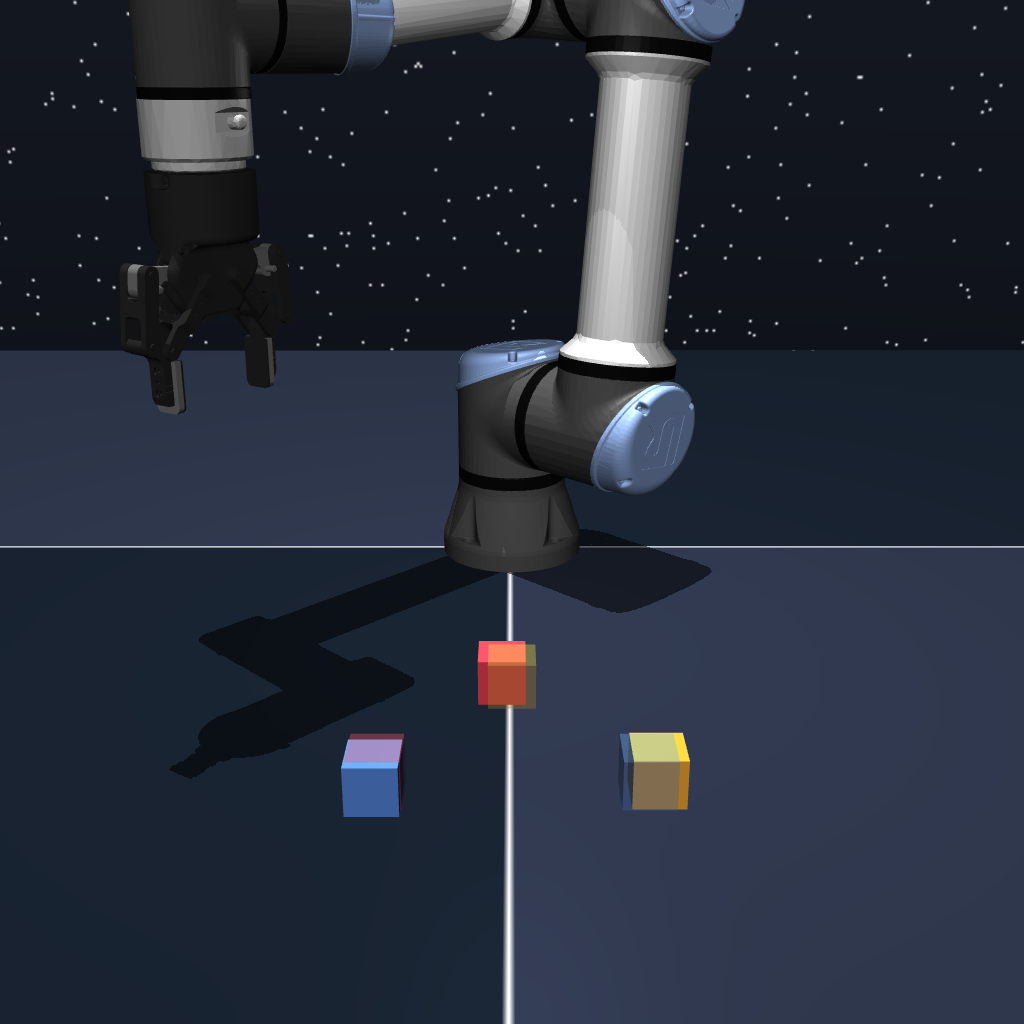}\hfill
  \includegraphics[width=0.48\linewidth]{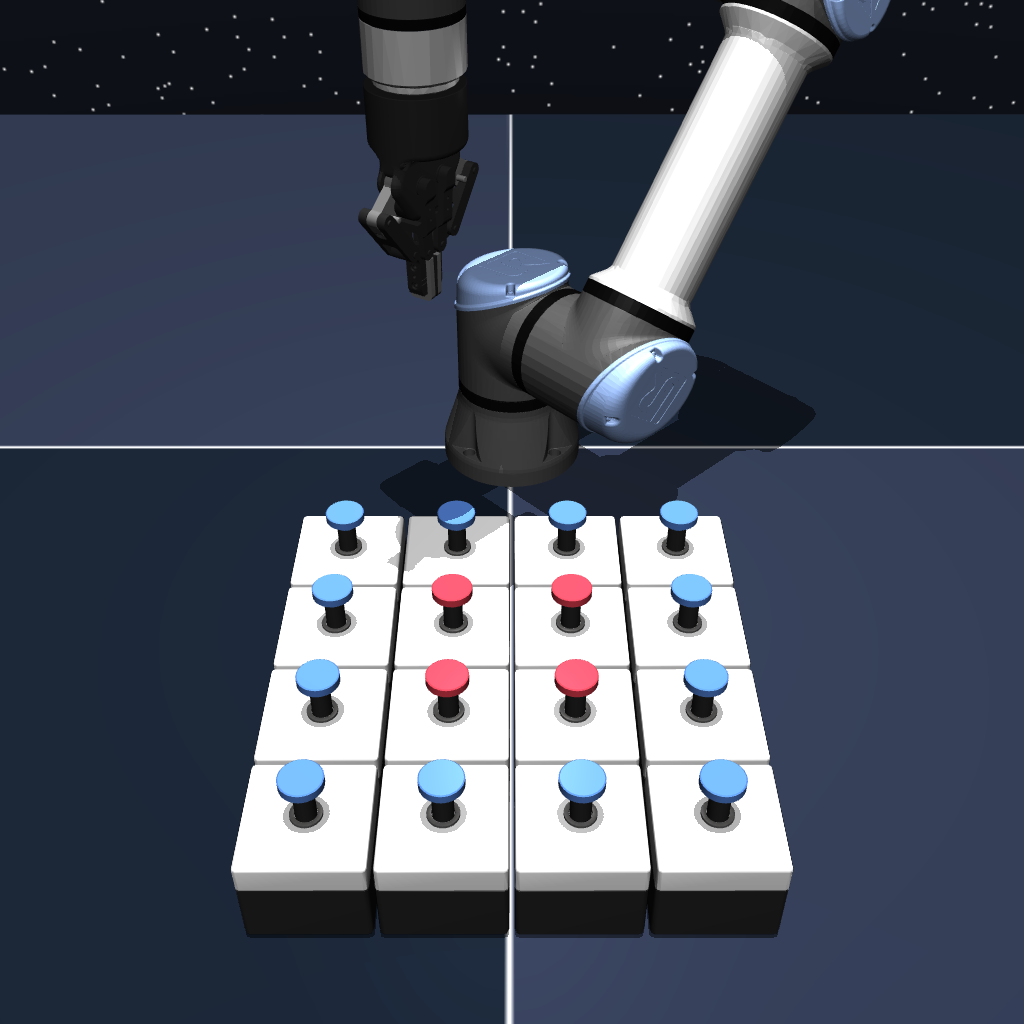} 
  \caption{\textbf{OGBench domains \citep{park2025ogbench}}. \textbf{Left:} \texttt{cube-triple} requires a robotic arm to manipulate 3 cubes from an initial arrangement to a goal arrangement. \textbf{Right:} \texttt{puzzle-4x4-100M-sparse} features a challenging, sparse-reward multi-stage navigation puzzle. The offline datasets for these domains contain multi-modal behaviors that necessitate expressive generative models.}
  \label{fig:env_renders}
\end{figure}

\subsection{Baselines}
\label{appendix:baselines}

While Section \ref{sec:experiments} briefly enumerates our baselines, we provide a more detailed mechanical categorization here to contextualize the comparisons. Following the taxonomy established by \citet{li2026qlearning}, we group the  SOTA offline RL algorithms into several distinct methodological paradigms:

\begin{itemize}
  \item \textbf{Backpropagation-based Flows:} Methods that directly differentiate through the generative process. This includes FBRAC (which backpropagates through the flow policy's multi-step denoising), FQL (which backpropagates through a 1-step distilled policy) \citep{park2025flow}, and BAM. BAM acts as a direct ablation of standard Adjoint Matching, utilizing a basic regression objective whose gradient is mathematically equivalent to backpropagating through a memory-less SDE \citep{li2026qlearning}.
  \item \textbf{Advantage-Weighted Regression:} FAWAC \citep{park2025flow}, which extends the classic AWAC \citep{nair2020awac} framework by modeling the policy with a continuousizing flow model.
  \item \textbf{Critic Gradient Guidance:} Methods inspired by classifier guidance \citep{dhariwal2021diffusion} that leverage the action gradient ($\nabla_a Q(s, a)$) to steer the generative model. This includes DAC \citep{fang2025diffusion}, QSM \citep{qsm_psenka2024}, and the CGQL family (CGQL, CGQL-MSE, CGQL-Linex).
  \item \textbf{Post-processing and Edits:} Approaches that alter the action after the base generative step. This includes DSRL \citep{wagenmaker2025steering}, FEdit (which appends a learnable Gaussian edit policy to the flow \citep{dong2026expo}), and IFQL (which relies on best-of-$N$ rejection sampling for policy extraction \citep{hansen2023idql}).
  \item \textbf{Adjoint Matching:} The QAM framework and its primary variants (QAM-E and QAM-F) \citep{li2026qlearning}, against which ME-AM serves as a direct, unified continuous-time alternative.
  \item \textbf{Gaussian Policies (ReBRAC, RLPD):} Strong, regularized non-generative baselines. We specifically include RLPD \citep{ball2023efficient} exclusively to evaluate sample efficiency during the offline-to-online fine-tuning phase. Because RLPD does not employ a behavioral constraint, \citet{li2026qlearning} trained it from scratch online using a 50/50 sampling ratio (i.e., half of the training batch is sampled from the static offline dataset, and half from the active online replay buffer).
\end{itemize}

\paragraph{Data Sourcing and Seed Matching.}
For the empirical results reported in this paper, we utilize the raw experimental logs published by the authors of QAM, available in their official repository (\url{https://github.com/ColinQiyangLi/qam}). In their original evaluation protocol, the authors ran 12 total seeds per method, allocating 4 seeds exclusively for hyperparameter tuning and 8 new seeds for final evaluation. Because our ME-AM evaluations are conducted over exactly 8 independent seeds, we ensure perfect statistical parity by aggregating only the results from those 8 designated evaluation seeds.

\subsection{Hyperparameter Tuning Protocol}
\label{appendix:hyperparameters}

To ensure fairness with the established baselines, we employ an identical hyperparameter tuning protocol as described by \citet{li2026qlearning}. Specifically, we tune over 4 seeds on 2 representative tasks from each domain (task2 and task4). We maintain all shared hyperparameters across methods (e.g., learning rate, optimizer, batch size, target update rate) exactly equal to those used in the baseline algorithms.

Because ME-AM introduces additional domain-specific hyperparameters ($\lambda$, $\sigma_{\min}$, $\sigma_{\max}$, $\eta$, $\beta$, and $\epsilon$), we deliberately restricted the tuning grid to be as small as possible to avoid overfitting. For the noise floor $\sigma_{\min}$, we tested two discrete values: $0.3$ and $10^{-4}$. Because the norm of the estimated score $S_\psi$ scales inversely with $\sigma_{\min}$, we evaluated corresponding pairs of the inverse entropy scale ($1/\eta$): when testing $\sigma_{\min} = 0.3$, we swept $1/\eta \in \{1.0, 0.1\}$; when testing $\sigma_{\min} = 10^{-4}$, we swept $1/\eta \in \{10^{-3}, 10^{-4}\}$. 

For the path interpolation weight $\lambda$, we tested values in $\{0.95, 0.9, 0.8\}$. We empirically observed that using values smaller than $0.8$ often led to exploding gradients at the beginning of training on certain seeds. For the mixture probability $\epsilon$, we evaluated values in $\{0.0, 0.1, 0.2, 0.5\}$. Finally, we fixed the maximum noise level $\sigma_{\max} = 0.7$ and the inverse reward temperature $1/\beta = 5.0$ globally across all experiments, having initially tested $1/\beta \in \{10.0, 5.0\}$.

Finally, the offline-to-online fine-tuning results were obtained by initializing the agent with the best-performing offline models and subsequently letting them interact directly with the online environment.

Table \ref{tab:rl-hyperparams} summarizes both the shared architectural settings and the final domain-specific hyperparameters used for our main evaluations.

\begin{table}[ht]
  \centering
  \begin{tabular}{@{}lc@{}}
   \toprule
   \textbf{Parameter} & \textbf{Value} \\
   \midrule
   \multicolumn{2}{c}{\textbf{Shared Hyperparameters}} \\
   \midrule
   Batch size & $256$ \\
   Discount factor ($\gamma$) & $0.99$ \\
   Optimizer & Adam \\
   Learning rate & $3 \times 10^{-4}$ \\
   Target network update rate ($\rho$)  & $5 \times 10^{-3}$ \\
   Critic ensemble size & $10$ \\
   Critic target pessimistic coefficient  & $0.5$ \\ 
   UTD ratio & $1$ \\
   Number of flow integration steps  & $10$ \\
   Number of offline training steps & $10^6$ \\
   Number of online environment steps & $0.5 \times 10^6$ \\
   Network architecture (All networks used) & $4$ hidden layers, $512$ width \\
   Optimizer gradient max norm clipping & $1$ \\
   Maximum noise level ($\sigma_{\max}$) & $0.7$ \\
   Inverse reward temperature ($1/\beta$) & $5.0$ \\
   \midrule
   \multicolumn{2}{c}{\textbf{Domain-Specific Hyperparameters: \texttt{p44}}} \\
   \midrule
   Path interpolation weight ($\lambda$) & $1.0$ \\
   Inverse entropy scale ($1/\eta$) & $1.0$ \\
   Noise floor ($\sigma_{\min}$) & $0.3$ \\
   Mixture probability ($\epsilon$) & $0.2$ \\
   \midrule
   \multicolumn{2}{c}{\textbf{Domain-Specific Hyperparameters: \texttt{ct}}} \\
   \midrule
   Path interpolation weight ($\lambda$) & $0.8$ \\
   Inverse entropy scale ($1/\eta$) & $10^{-3}$ \\
   Noise floor ($\sigma_{\min}$) & $10^{-4}$ \\
   Mixture probability ($\epsilon$) & $0.0$ \\
  \bottomrule
  \end{tabular}
  \vspace{1mm}
  \caption{\textbf{ME-AM Hyperparameters.} Global shared settings and final domain-specific values utilized for both the offline and online fine-tuning phases.}
  \label{tab:rl-hyperparams}
\end{table}

\begin{table}[h]
\centering

\resizebox{\textwidth}{!}{%
\begin{tabular}{lccc}
\toprule
\textbf{Method} & \textbf{\begin{tabular}[c]{@{}c@{}}Training \\ Parameters\end{tabular}} & \textbf{\begin{tabular}[c]{@{}c@{}}Inference Speed\\ (ms/step)\end{tabular}} & \textbf{\begin{tabular}[c]{@{}c@{}}Training Speed\\ (ms/step)\end{tabular}} \\ 
\midrule
RLPD & 8,931,867 & 0.02 & 2.52 \\
ReBRAC & 8,927,762 & 0.02 & 2.62 \\
FQL & 9,743,898 & 0.02 & 3.39 \\
FAWAC & 9,740,307 & 0.14 & 3.29 \\
FEdit & 9,748,003 & 0.17 & 3.26 \\
FBRAC & 8,932,370 & 0.14 & 4.33 \\
DSRL & 18,676,277 & 0.16 & 4.64 \\
IFQL & 9,740,307 & 0.56 & 2.15 \\
QSM & 10,015,282 & 0.18 & 3.82 \\
DAC & 10,015,282 & 0.17 & 3.82 \\
BAM & 9,744,410 & 0.14 & 4.96 \\
CGQL & 8,937,490 & 1.21 & 7.77 \\
CGQL-M & 8,937,490 & 1.21 & 8.98 \\
CGQL-L & 8,937,490 & 1.21 & 8.97 \\
\midrule
QAM & 9,744,410 & 0.14 & 5.04 \\
QAM-E & 10,560,043 & 0.16 & 6.31 \\
QAM-F & 10,555,938 & 0.03 & 5.85 \\
\midrule
ME-AM ($\lambda=1.0$) & 11,408,435 & 0.14 & 6.79 \\
ME-AM ($\epsilon=0.0$) & 10,592,802 & 0.14 & 7.28 \\
ME-AM (Full) & 11,408,435 & 0.14 & 7.73 \\
\bottomrule
\end{tabular}%
}
\caption{Computational profiling of ME-AM compared to baseline methods.}
\label{tab:computation_profiling}
\end{table}

\subsection{Ablation Study Details}
\label{appendix:ablation_details}

In Section \ref{sec:experiments}, we presented a series of ablation studies to isolate the impact of our core algorithmic components. 

\paragraph{Geometric Expansion (RQ1).} 
To evaluate the impact of the Reward-Guided Mixture Prior (Figure \ref{fig:all_ablations}a), we fixed all shared and domain-specific hyperparameters to their optimal \texttt{p44} values as detailed in Table \ref{tab:rl-hyperparams}. To isolate the effect of the geometric expansion, we exclusively varied the mixture probability $\epsilon \in \{0.0, 0.1, 0.2, 0.5\}$. The $\epsilon=0.0$ setting directly corresponds to relying on the empirical behavior support, whereas $\epsilon > 0$ introduces the expanded target distribution.

\paragraph{KL Path Regularization (RQ2, Left).} 
To investigate the necessity of the explicit KL path constraint in the deterministic setting (Figure \ref{fig:all_ablations}b, left), we again maintained the optimal baseline hyperparameters and evaluated three distinct optimization scenarios. For clarity, we map the mathematical parameterizations used in our code directly to the conceptual formulations discussed in the main text:
\begin{itemize}
    \item \textbf{Standard Adjoint Matching (No MD):} We set $\lambda = 1.0$ and $1/\eta = 0.0$. This completely removes the Mirror Descent density flattening penalty, recovering standard AM behavior.
    \item \textbf{Pure Density Flattening ($\lambda = 1.0, \eta < \infty$):} We set $\lambda = 1.0$ and $1/\eta = 1.0$. This applies the active entropy maximization but ties the path constraint entirely to the static mixture base flow, applying no constraint against the active, slow-moving target anchor.
    \item \textbf{Explicit KL Regularization ($\lambda \neq 1.0, \eta < \infty$):} We set $\lambda = 0.95$ and $1/\eta = 1.0$. This explicitly enforces the continuous-time KL path constraint by pulling the optimization toward the active target anchor $v_{k-1}$.
\end{itemize}

\paragraph{KL Path Regularization in Noisy Settings (RQ2, Right).}
To better understand the conditions under which explicit KL path regularization becomes critical, we simulated a high-variance optimization landscape (Figure \ref{fig:all_ablations}b, right). Instead of using the deterministic mean of the auxiliary actor to expand the manifold, we introduced stochasticity by sampling actions directly from the geometric expansion policy ($a_{\text{aug}} \sim \pi_\omega$). 

In this noisy regime, undirected sampling frequently pushes the model into regions with weak or negative reward signals, causing the initial Q-gradient to vanish and triggering critic collapse. To bootstrap a meaningful gradient, we had to aggressively scale the reward signal by setting $1/\beta = 15.0$ and increase the mixture probability to $\epsilon = 0.3$. However, while this extreme scaling prevents early collapse, it rapidly induces severe OOD overestimation later in training, where Q-values artificially inflate while actual evaluation success rates plummet. To mitigate this OOD explosion, we   clipped the terminal Q-gradient during the Adjoint backward step to a maximum norm of 1.

Under these noisy conditions, we evaluated the stabilizing effect of the path constraint. We found that heavily prioritizing the slow-moving target anchor, by setting $\lambda = 0.2$ alongside a moderate density penalty $1/\eta = 0.1$, was necessary to stabilize the training dynamics. Notably, the engineering manipulations applied here (specifically terminal gradient clipping) were strictly isolated to this specific ablation study to stress-test the KL constraint; they are not utilized in our standard ME-AM algorithm.

\paragraph{Noise Floor (RQ3).} 
To evaluate the effect of the evaluation noise floor (Figure \ref{fig:all_ablations}c), we fixed the path interpolation weight to $\lambda = 1.0$. We then compared the performance of a larger noise floor ($\sigma_{\min} = 0.3$ paired with $1/\eta = 1.0$) against a standard minimal noise level ($\sigma_{\min} = 10^{-4}$ paired with $1/\eta = 10^{-3}$).

\subsection{Computational Complexity and Deployment Trade-offs.} 
We benchmark the computational requirements of ME-AM against QAM and other baseline methods using a single NVIDIA GeForce RTX 3090 GPU (Table \ref{tab:computation_profiling}). Inference speeds were profiled using pure XLA-compiled device execution,   isolating the algorithmic complexity from Python-side dispatch overhead. While ME-AM requires a slightly larger memory footprint and higher system throughput during the offline training phase (7.73 ms/step vs. 5.04 ms/step for baseline QAM), this is a direct consequence of optimizing the enriched MaxEnt Adjoint Matching objective, specifically, training the auxiliary score network and the target noise generator. Crucially, these auxiliary networks act purely as training-time regularizers and are completely severed from the computational graph prior to deployment. Consequently, the active inference architecture of ME-AM is identical to standard flow-matching policies, allowing it to achieve significant performance gains while matching the baseline flow-matching inference latency of 0.14 ms/step.

\clearpage % Start a completely fresh page

\section{Full Results} % Your title goes here!

\input{./full_table.tex} 

\begin{figure}[H] % Changed from [p] to [H] to force it directly under the table
  \centering
  \noindent\makebox[\textwidth]{%
   \includegraphics[width=1.45\linewidth]{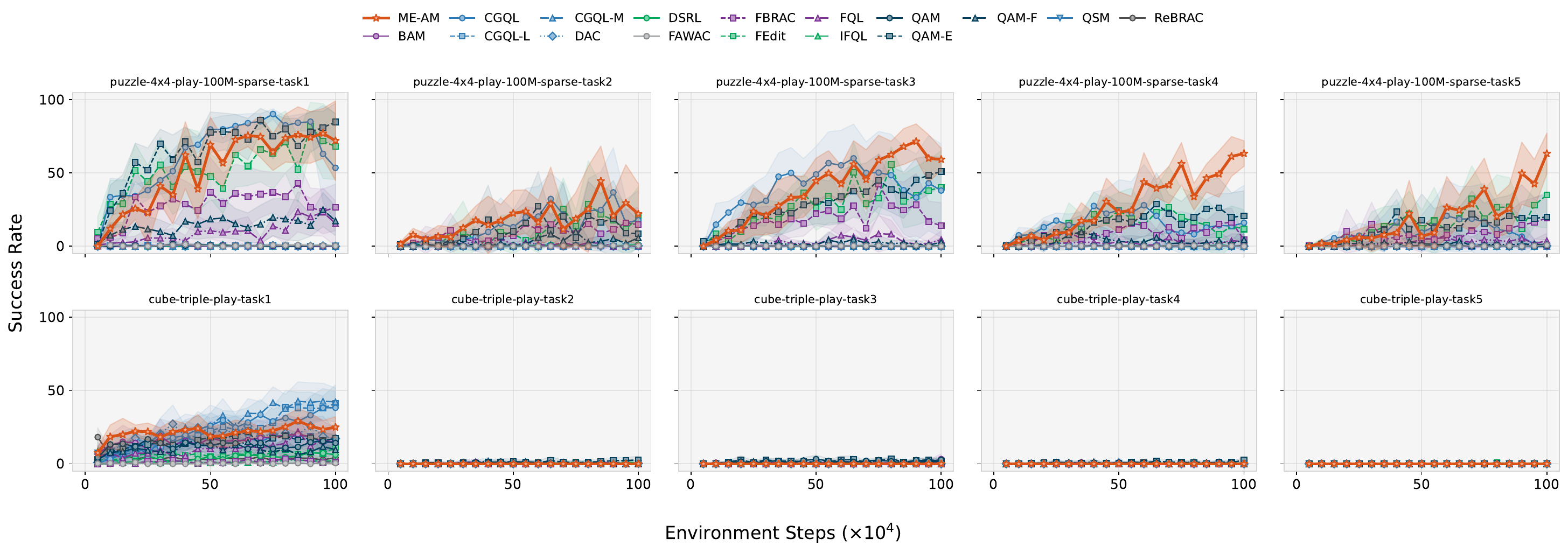}
  }
  \caption{\textbf{Full training curves for the 1M iteration offline phase (8 seeds).}}
  \label{fig:full_training_curves}
\end{figure}

\clearpage % Ensure the next section starts on a new page

\section{Negative Results}
\label{sec:appendix_negative_results}

In developing the Mixture Behavior Prior (Section \ref{sec:method_mixture}), we evaluated alternative architectures to expand the valid data support. We conducted these structural ablations primarily on the \texttt{p44} domain, as its dispersed data manifold   necessitates geometric expansion. We document these observations to provide insight into the empirical sensitivities of continuous-time offline RL.

\paragraph{Naive Stochastic Edits.}
We initially tested unstructured exploration by applying Ornstein-Uhlenbeck \citep{uhlenbeck1930theory, lillicrap2019continuouscontroldeepreinforcement} processes and uniform random noise, evaluating these both as residual edits added to the dataset actions and as independent, full-action samples across the bounds. Empirically, all of these variants led to extreme performance variance across different random seeds, with the critic collapsing in the majority of runs. Specifically, we observed the critic frequently degenerating to predict a constant minimal value ($-100.0$) across all states. This suggests that in fragmented, sparse-reward landscapes, undirected noise overwhelmingly samples zero-reward regions. Consequently, the critic fails to bootstrap a meaningful gradient.

\paragraph{Derivative-Free Candidate Selection.}
We evaluated a derivative-free strategy by generating multiple candidate actions for a given state (again testing both residual noise expansions and absolute action sampling), evaluating their Q-values, and selecting the candidate with the highest predicted return. Our intuition was that this ``best-of-$N$'' selection might circumvent the risk of gradient-based optimization collapsing into bad local minima. However, we observed that performance remained sub-optimal and suffered from similar critic collapse issues. We posit that in high-dimensional action spaces with structural voids, the probability of undirected sampling successfully guessing a valid, high-reward out-of-distribution mode remains extremely low. 

\paragraph{The Necessity of Local Conditioning.}
When optimizing the auxiliary actor $\pi_\omega$, we evaluated a state-only parameterization, $\pi_\omega(\cdot | s)$, to predict a global high-reward coordinate. We observed that this approach often resulted in mode collapse. By withholding the underlying dataset action $a_{\text{data}}$, the actor appeared to predict a single, unimodal target for a given state, which can severely degrade the expressivity of the downstream flow model. This suggests that conditioning on the concatenation, $\pi_\omega(\cdot | s, a_{\text{data}})$, is critical for forcing the network to learn \textit{local} geometric expansions anchored around existing data clusters.

%\newpage
%\input{checklist}

\end{document}

%% file: figs/method_overview.tex
\begin{tikzpicture}[
    every node/.style={scale=1.0},
    pathline/.style={->, ultra thick, draw=black!80, smooth},
    editline/.style={->, ultra thick, draw=black!80, dashed}
]

% ==========================================
% DEFINING CUSTOM PAPER COLORS
% ==========================================
\definecolor{midnightblue}{HTML}{003F5C}
\definecolor{burntorange}{HTML}{D95319}

% ==========================================
% DEFINING SHAPE PLOTS (FOR MULTI-MODALITY)
% ==========================================
% Custom command to draw the multimodal 'flower' shape
\newcommand{\multimodalshape}[1]{
    plot[domain=0:360, samples=100, smooth cycle] (\x: {0.8 + 0.2*cos(4*\x)})
}

% ==========================================
% THE ACTION SPACE GEOMETRY (\Omega)
% ==========================================
% Shifted to the right (starts at x=2.5) to keep it disjoint from initial noise
\filldraw[fill=gray!10, draw=gray!30, thick, rounded corners=10mm] (2.5, -4.5) rectangle (12.5, 4.5);
\node[text=gray!70, font=\LARGE\bfseries] at (11.8, 3.8) {$\Omega$};

% ==========================================
% MIDDLE SEPARATOR
% ==========================================
\draw[ultra thick, dashed, darkgray!60] (-1.5, 0) -- (13.0, 0);

% ==========================================
% TOP BRANCH: QAM & QAM-Edit (Baseline)
% ==========================================
\node[text=black!60, font=\Large\bfseries, anchor=west] at (2.8, 4.0) {(a) QAM \& QAM-Edit};

% Initial Noise Distribution (p_0) - Top
\begin{scope}[shift={(0, 2.0)}]
    % Gaussian fading out to white using midnight blue
    \shade[inner color=midnightblue!70, outer color=white, draw=gray!40, thick] (0,0) circle (1.2);
    \node[text=black!90, font=\huge\bfseries] at (0, 0) {$p_0$};
\end{scope}

% Behavioral Density (\pi_\beta) 
\begin{scope}[shift={(5.5, 2.0)}]
    \shade[inner color=midnightblue, outer color=midnightblue!20, draw=midnightblue, thick] \multimodalshape{0};
    \node[text=white, font=\Large\bfseries] at (0,0) {$\pi_\beta$};
\end{scope}

% Edit Target Density (Orange) 
\begin{scope}[shift={(10.5, 2.0)}]
    \shade[inner color=burntorange, outer color=burntorange!20, draw=burntorange, thick] \multimodalshape{0};
    \node[text=white, font=\Large\bfseries] at (0,0) {$\pi^*$};
    \node[text=burntorange, font=\large\bfseries] at (0, 1.3) {High $Q_\phi$};
\end{scope}

% The Trapped QAM Paths (Two arrows to show flow)
\draw[pathline] (1.1, 2.5) to[out=15, in=165] (4.6, 2.5);
\draw[pathline] (1.1, 1.5) to[out=-15, in=-165] (4.6, 1.5);
\node[above, font=\large\bfseries, text=black!80] at (3.0, 2.7) {$v_{\text{QAM}}$};

% The Disjointed Gaussian Edit 
\draw[editline] (6.5, 2.0) -- node[above, text=black!80, font=\Large\bfseries] {$\Delta a$} (9.5, 2.0);

% ==========================================
% BOTTOM BRANCH: ME-AM (Ours)
% ==========================================
\node[text=midnightblue, font=\Large\bfseries, anchor=west] at (2.8, -4.0) {(b) ME-AM (Ours)};

% Initial Noise Distribution (p_0) - Bottom
\begin{scope}[shift={(0, -2.0)}]
    \shade[inner color=midnightblue!70, outer color=white, draw=gray!40, thick] (0,0) circle (1.2);
    \node[text=black!90, font=\huge\bfseries] at (0, 0) {$p_0$};
\end{scope}

% ME-AM Expanded Support (\pi_mix) - Stretched Flower
\begin{scope}[shift={(8.0, -2.0)}, xscale=3.2, yscale=1.15]
    % Increased scale horizontally and vertically so it comfortably engulfs the right side
    \shade[left color=midnightblue!80, right color=midnightblue!80, middle color=midnightblue!15, draw=midnightblue, thick, dashed] \multimodalshape{0};
\end{scope}
% Moved \pi_mix to sit clearly inside the left-middle portion of the stretched shape
\node[text=white, font=\Large\bfseries] at (6.0, -2.0) {$\pi_{\text{mix}}$};

% Optimal Boltzmann Density (\pi^*) seamlessly embedded in the stretched structure
\begin{scope}[shift={(10.5, -2.0)}]
    \shade[inner color=burntorange, outer color=burntorange!20, draw=burntorange, thick] \multimodalshape{0};
    \node[text=white, font=\Large\bfseries] at (0,0) {$\pi^*$};
    \node[text=burntorange, font=\large\bfseries] at (0, -1.5) {High $Q_\phi$};
\end{scope}

% The Unified ME-AM Paths (Two arrows bridging the entire distance)
\draw[pathline] (1.1, -1.5) to[out=15, in=165] (10.0, -1.5);
\draw[pathline] (1.1, -2.5) to[out=-15, in=-165] (10.0, -2.5);
\node[below, font=\large\bfseries, text=black!80] at (5.0, -2.6) {$v_{\text{ME-AM}}$};

\end{tikzpicture}

%% file: full_results_agg.tex
\begin{table}[t]
    \centering
    \scalebox{0.75}{
    \begin{tabular}{clccc}
\toprule
 & & \texttt{p44} & \texttt{ct} & \texttt{all} \\
 & & \tiny\texttt{5 tasks} & \tiny\texttt{5 tasks} & \tiny\texttt{10 tasks} \\
\midrule
\multirow{2}{*}{\textsc{{Gaussian}}} 
& \texttt{ReBRAC} 
& \phantom{00}$\overset{\phantom{0}\scaleto{[0,0]}{3pt}}{\phantom{0}\phantom{0}0}$\phantom{00} 
& \phantom{00}$\overset{\phantom{0}\scaleto{[0,2]}{3pt}}{\phantom{0}\phantom{0}1}$\phantom{00} 
& \phantom{00}$\overset{\phantom{0}\scaleto{[0,1]}{3pt}}{\phantom{0}\phantom{0}0}$\phantom{00} \\
& \texttt{FAWAC} 
& \phantom{00}$\overset{\phantom{0}\scaleto{[0,1]}{3pt}}{\phantom{0}\phantom{0}0}$\phantom{00} 
& \phantom{00}$\overset{\phantom{0}\scaleto{[0,0]}{3pt}}{\phantom{0}\phantom{0}0}$\phantom{00} 
& \phantom{00}$\overset{\phantom{0}\scaleto{[0,0]}{3pt}}{\phantom{0}\phantom{0}0}$\phantom{00} \\
\midrule
\multirow{3}{*}{\textsc{{Backprop}}} 
& \texttt{FQL} 
& \phantom{00}$\overset{\phantom{0}\scaleto{[2,9]}{3pt}}{\phantom{0}\phantom{0}5}$\phantom{00} 
& \phantom{00}$\overset{\phantom{0}\scaleto{[1,5]}{3pt}}{\phantom{0}\phantom{0}3}$\phantom{00} 
& \phantom{00}$\overset{\phantom{0}\scaleto{[2,6]}{3pt}}{\phantom{0}\phantom{0}4}$\phantom{00} \\
& \texttt{FEdit} 
& \phantom{00}$\overset{\phantom{0}\scaleto{[25,45]}{3pt}}{\phantom{0}35}$\phantom{00} 
& \phantom{00}$\overset{\phantom{0}\scaleto{[1,4]}{3pt}}{\phantom{0}\phantom{0}3}$\phantom{00} 
& \phantom{00}$\overset{\phantom{0}\scaleto{[13,25]}{3pt}}{\phantom{0}19}$\phantom{00} \\
& \texttt{FBRAC} 
& \phantom{00}$\overset{\phantom{0}\scaleto{[10,21]}{3pt}}{\phantom{0}16}$\phantom{00} 
& \phantom{00}$\overset{\phantom{0}\scaleto{[0,1]}{3pt}}{\phantom{0}\phantom{0}0}$\phantom{00} 
& \phantom{00}$\overset{\phantom{0}\scaleto{[5,11]}{3pt}}{\phantom{0}\phantom{0}8}$\phantom{00} \\
\midrule
\multirow{5}{*}{\textsc{{Guidance}}} 
& \texttt{CGQL} 
& \phantom{00}$\overset{\phantom{0}\scaleto{[14,36]}{3pt}}{\phantom{0}25}$\phantom{00} 
& \phantom{00}$\overset{\phantom{0}\scaleto{[3,13]}{3pt}}{\phantom{0}\phantom{0}\mathbf{8}}$\phantom{00} 
& \phantom{00}$\overset{\phantom{0}\scaleto{[10,23]}{3pt}}{\phantom{0}16}$\phantom{00} \\
& \texttt{CGQL-L} 
& \phantom{00}$\overset{\phantom{0}\scaleto{[0,0]}{3pt}}{\phantom{0}\phantom{0}0}$\phantom{00} 
& \phantom{00}$\overset{\phantom{0}\scaleto{[3,14]}{3pt}}{\phantom{0}\phantom{0}\mathbf{8}}$\phantom{00} 
& \phantom{00}$\overset{\phantom{0}\scaleto{[1,7]}{3pt}}{\phantom{0}\phantom{0}4}$\phantom{00} \\
& \texttt{CGQL-M} 
& \phantom{00}$\overset{\phantom{0}\scaleto{[0,0]}{3pt}}{\phantom{0}\phantom{0}0}$\phantom{00} 
& \phantom{00}$\overset{\phantom{0}\scaleto{[3,14]}{3pt}}{\phantom{0}\phantom{0}\mathbf{8}}$\phantom{00} 
& \phantom{00}$\overset{\phantom{0}\scaleto{[1,7]}{3pt}}{\phantom{0}\phantom{0}4}$\phantom{00} \\
& \texttt{DAC} 
& \phantom{00}$\overset{\phantom{0}\scaleto{[0,0]}{3pt}}{\phantom{0}\phantom{0}0}$\phantom{00} 
& \phantom{00}$\overset{\phantom{0}\scaleto{[1,6]}{3pt}}{\phantom{0}\phantom{0}4}$\phantom{00} 
& \phantom{00}$\overset{\phantom{0}\scaleto{[1,3]}{3pt}}{\phantom{0}\phantom{0}2}$\phantom{00} \\
& \texttt{QSM} 
& \phantom{00}$\overset{\phantom{0}\scaleto{[0,0]}{3pt}}{\phantom{0}\phantom{0}0}$\phantom{00} 
& \phantom{00}$\overset{\phantom{0}\scaleto{[1,5]}{3pt}}{\phantom{0}\phantom{0}3}$\phantom{00} 
& \phantom{00}$\overset{\phantom{0}\scaleto{[1,3]}{3pt}}{\phantom{0}\phantom{0}2}$\phantom{00} \\
\midrule
\multirow{2}{*}{\textsc{{Post-processing}}} 
& \texttt{IFQL} 
& \phantom{00}$\overset{\phantom{0}\scaleto{[0,1]}{3pt}}{\phantom{0}\phantom{0}0}$\phantom{00} 
& \phantom{00}$\overset{\phantom{0}\scaleto{[0,1]}{3pt}}{\phantom{0}\phantom{0}0}$\phantom{00} 
& \phantom{00}$\overset{\phantom{0}\scaleto{[0,1]}{3pt}}{\phantom{0}\phantom{0}0}$\phantom{00} \\
& \texttt{DSRL} 
& \phantom{00}$\overset{\phantom{0}\scaleto{[0,0]}{3pt}}{\phantom{0}\phantom{0}0}$\phantom{00} 
& \phantom{00}$\overset{\phantom{0}\scaleto{[0,2]}{3pt}}{\phantom{0}\phantom{0}1}$\phantom{00} 
& \phantom{00}$\overset{\phantom{0}\scaleto{[0,1]}{3pt}}{\phantom{0}\phantom{0}1}$\phantom{00} \\
\midrule
\multirow{5}{*}{\textsc{{Adjoint Matching}}} 
& \texttt{BAM} 
& \phantom{00}$\overset{\phantom{0}\scaleto{[0,0]}{3pt}}{\phantom{0}\phantom{0}0}$\phantom{00} 
& \phantom{00}$\overset{\phantom{0}\scaleto{[1,6]}{3pt}}{\phantom{0}\phantom{0}4}$\phantom{00} 
& \phantom{00}$\overset{\phantom{0}\scaleto{[1,3]}{3pt}}{\phantom{0}\phantom{0}2}$\phantom{00} \\
& \texttt{QAM} 
& \phantom{00}$\overset{\phantom{0}\scaleto{[0,0]}{3pt}}{\phantom{0}\phantom{0}0}$\phantom{00} 
& \phantom{00}$\overset{\phantom{0}\scaleto{[2,5]}{3pt}}{\phantom{0}\phantom{0}4}$\phantom{00} 
& \phantom{00}$\overset{\phantom{0}\scaleto{[1,3]}{3pt}}{\phantom{0}\phantom{0}2}$\phantom{00} \\
& \texttt{QAM-F} 
& \phantom{00}$\overset{\phantom{0}\scaleto{[3,9]}{3pt}}{\phantom{0}\phantom{0}6}$\phantom{00} 
& \phantom{00}$\overset{\phantom{0}\scaleto{[1,4]}{3pt}}{\phantom{0}\phantom{0}2}$\phantom{00} 
& \phantom{00}$\overset{\phantom{0}\scaleto{[3,6]}{3pt}}{\phantom{0}\phantom{0}4}$\phantom{00} \\
& \texttt{QAM-E} 
& \phantom{00}$\overset{\phantom{0}\scaleto{[27,47]}{3pt}}{\phantom{0}37}$\phantom{00} 
& \phantom{00}$\overset{\phantom{0}\scaleto{[2,7]}{3pt}}{\phantom{0}\phantom{0}5}$\phantom{00} 
& \phantom{00}$\overset{\phantom{0}\scaleto{[15,27]}{3pt}}{\phantom{0}21}$\phantom{00} \\
& \texttt{ME-AM (ours)} 
& \phantom{00}$\overset{\phantom{0}\scaleto{[48,64]}{3pt}}{\phantom{0}\mathbf{56}}$\phantom{00} 
& \phantom{00}$\overset{\phantom{0}\scaleto{[2,8]}{3pt}}{\phantom{0}\phantom{0}5}$\phantom{00} 
& \phantom{00}$\overset{\phantom{0}\scaleto{[23,38]}{3pt}}{\phantom{0}\mathbf{30}}$\phantom{00} \\
\bottomrule
\end{tabular}
}
\caption{ \textbf{Aggregate offline RL performance at 1M training steps (8 seeds) across 2 domains (10 tasks).} Comparison of our ME-AM against strong offline RL baselines on sparse-reward domains. For full results on individual tasks, see \cref{tab:sparse-results-tab} in the appendix.} 
\label{tab:agg-sparse-results}
\end{table}

%% file: full_table.tex
\begin{table}[H]
    \centering
    \makebox[\textwidth]{\scalebox{0.55}{
    \begin{tabular}{cl ccccccccccccccccc}
\toprule
 &  & \texttt{ReBRAC} & \texttt{FQL} & \texttt{CGQL} & \texttt{DAC} & \texttt{QSM} & \texttt{FAWAC} & \texttt{IFQL} & \texttt{FEdit} & \texttt{DSRL} & \texttt{CGQL-L} & \texttt{CGQL-M} & \texttt{FBRAC} & \texttt{BAM} & \texttt{QAM} & \texttt{QAM-F} & \texttt{QAM-E} & \texttt{ME-AM (ours)} \\
\midrule
 & \texttt{task1} & \phantom{00}$\overset{\phantom{0}\scaleto{[0,0]}{3pt}}{\phantom{0}\phantom{0}0}$\phantom{00} & \phantom{00}$\overset{\phantom{0}\scaleto{[0,31]}{3pt}}{\phantom{0}16}$\phantom{00} & \phantom{00}$\overset{\phantom{0}\scaleto{[17,90]}{3pt}}{\phantom{0}54}$\phantom{00} & \phantom{00}$\overset{\phantom{0}\scaleto{[0,0]}{3pt}}{\phantom{0}\phantom{0}0}$\phantom{00} & \phantom{00}$\overset{\phantom{0}\scaleto{[0,1]}{3pt}}{\phantom{0}\phantom{0}0}$\phantom{00} & \phantom{00}$\overset{\phantom{0}\scaleto{[0,1]}{3pt}}{\phantom{0}\phantom{0}0}$\phantom{00} & \phantom{00}$\overset{\phantom{0}\scaleto{[0,1]}{3pt}}{\phantom{0}\phantom{0}0}$\phantom{00} & \phantom{00}$\overset{\phantom{0}\scaleto{[45,91]}{3pt}}{\phantom{0}68}$\phantom{00} & \phantom{00}$\overset{\phantom{0}\scaleto{[0,0]}{3pt}}{\phantom{0}\phantom{0}0}$\phantom{00} & \phantom{00}$\overset{\phantom{0}\scaleto{[0,0]}{3pt}}{\phantom{0}\phantom{0}0}$\phantom{00} & \phantom{00}$\overset{\phantom{0}\scaleto{[0,0]}{3pt}}{\phantom{0}\phantom{0}0}$\phantom{00} & \phantom{00}$\overset{\phantom{0}\scaleto{[11,42]}{3pt}}{\phantom{0}26}$\phantom{00} & \phantom{00}$\overset{\phantom{0}\scaleto{[0,0]}{3pt}}{\phantom{0}\phantom{0}0}$\phantom{00} & \phantom{00}$\overset{\phantom{0}\scaleto{[0,0]}{3pt}}{\phantom{0}\phantom{0}0}$\phantom{00} & \phantom{00}$\overset{\phantom{0}\scaleto{[7,27]}{3pt}}{\phantom{0}17}$\phantom{00} & \phantom{00}$\overset{\phantom{0}\scaleto{[72,98]}{3pt}}{\phantom{0}\mathbf{85}}$\phantom{00} & \phantom{00}$\overset{\phantom{0}\scaleto{[45,99]}{3pt}}{\phantom{0}72}$\phantom{00} \\
 & \texttt{task2} & \phantom{00}$\overset{\phantom{0}\scaleto{[0,0]}{3pt}}{\phantom{0}\phantom{0}0}$\phantom{00} & \phantom{00}$\overset{\phantom{0}\scaleto{[0,1]}{3pt}}{\phantom{0}\phantom{0}0}$\phantom{00} & \phantom{00}$\overset{\phantom{0}\scaleto{[0,38]}{3pt}}{\phantom{0}18}$\phantom{00} & \phantom{00}$\overset{\phantom{0}\scaleto{[0,1]}{3pt}}{\phantom{0}\phantom{0}0}$\phantom{00} & \phantom{00}$\overset{\phantom{0}\scaleto{[0,0]}{3pt}}{\phantom{0}\phantom{0}0}$\phantom{00} & \phantom{00}$\overset{\phantom{0}\scaleto{[0,1]}{3pt}}{\phantom{0}\phantom{0}0}$\phantom{00} & \phantom{00}$\overset{\phantom{0}\scaleto{[0,2]}{3pt}}{\phantom{0}\phantom{0}1}$\phantom{00} & \phantom{00}$\overset{\phantom{0}\scaleto{[0,41]}{3pt}}{\phantom{0}20}$\phantom{00} & \phantom{00}$\overset{\phantom{0}\scaleto{[0,2]}{3pt}}{\phantom{0}\phantom{0}1}$\phantom{00} & \phantom{00}$\overset{\phantom{0}\scaleto{[0,0]}{3pt}}{\phantom{0}\phantom{0}0}$\phantom{00} & \phantom{00}$\overset{\phantom{0}\scaleto{[0,0]}{3pt}}{\phantom{0}\phantom{0}0}$\phantom{00} & \phantom{00}$\overset{\phantom{0}\scaleto{[3,26]}{3pt}}{\phantom{0}15}$\phantom{00} & \phantom{00}$\overset{\phantom{0}\scaleto{[0,0]}{3pt}}{\phantom{0}\phantom{0}0}$\phantom{00} & \phantom{00}$\overset{\phantom{0}\scaleto{[0,1]}{3pt}}{\phantom{0}\phantom{0}0}$\phantom{00} & \phantom{00}$\overset{\phantom{0}\scaleto{[0,13]}{3pt}}{\phantom{0}\phantom{0}6}$\phantom{00} & \phantom{00}$\overset{\phantom{0}\scaleto{[0,17]}{3pt}}{\phantom{0}\phantom{0}8}$\phantom{00} & \phantom{00}$\overset{\phantom{0}\scaleto{[1,43]}{3pt}}{\phantom{0}\mathbf{22}}$\phantom{00} \\
 & \texttt{task3} & \phantom{00}$\overset{\phantom{0}\scaleto{[0,0]}{3pt}}{\phantom{0}\phantom{0}0}$\phantom{00} & \phantom{00}$\overset{\phantom{0}\scaleto{[0,11]}{3pt}}{\phantom{0}\phantom{0}4}$\phantom{00} & \phantom{00}$\overset{\phantom{0}\scaleto{[6,70]}{3pt}}{\phantom{0}38}$\phantom{00} & \phantom{00}$\overset{\phantom{0}\scaleto{[0,0]}{3pt}}{\phantom{0}\phantom{0}0}$\phantom{00} & \phantom{00}$\overset{\phantom{0}\scaleto{[0,0]}{3pt}}{\phantom{0}\phantom{0}0}$\phantom{00} & \phantom{00}$\overset{\phantom{0}\scaleto{[0,0]}{3pt}}{\phantom{0}\phantom{0}0}$\phantom{00} & \phantom{00}$\overset{\phantom{0}\scaleto{[0,1]}{3pt}}{\phantom{0}\phantom{0}0}$\phantom{00} & \phantom{00}$\overset{\phantom{0}\scaleto{[15,65]}{3pt}}{\phantom{0}40}$\phantom{00} & \phantom{00}$\overset{\phantom{0}\scaleto{[0,0]}{3pt}}{\phantom{0}\phantom{0}0}$\phantom{00} & \phantom{00}$\overset{\phantom{0}\scaleto{[0,0]}{3pt}}{\phantom{0}\phantom{0}0}$\phantom{00} & \phantom{00}$\overset{\phantom{0}\scaleto{[0,0]}{3pt}}{\phantom{0}\phantom{0}0}$\phantom{00} & \phantom{00}$\overset{\phantom{0}\scaleto{[0,30]}{3pt}}{\phantom{0}14}$\phantom{00} & \phantom{00}$\overset{\phantom{0}\scaleto{[0,0]}{3pt}}{\phantom{0}\phantom{0}0}$\phantom{00} & \phantom{00}$\overset{\phantom{0}\scaleto{[0,0]}{3pt}}{\phantom{0}\phantom{0}0}$\phantom{00} & \phantom{00}$\overset{\phantom{0}\scaleto{[0,6]}{3pt}}{\phantom{0}\phantom{0}3}$\phantom{00} & \phantom{00}$\overset{\phantom{0}\scaleto{[41,61]}{3pt}}{\phantom{0}51}$\phantom{00} & \phantom{00}$\overset{\phantom{0}\scaleto{[52,67]}{3pt}}{\phantom{0}\mathbf{59}}$\phantom{00} \\
 & \texttt{task4} & \phantom{00}$\overset{\phantom{0}\scaleto{[0,0]}{3pt}}{\phantom{0}\phantom{0}0}$\phantom{00} & \phantom{00}$\overset{\phantom{0}\scaleto{[0,8]}{3pt}}{\phantom{0}\phantom{0}3}$\phantom{00} & \phantom{00}$\overset{\phantom{0}\scaleto{[0,42]}{3pt}}{\phantom{0}16}$\phantom{00} & \phantom{00}$\overset{\phantom{0}\scaleto{[0,0]}{3pt}}{\phantom{0}\phantom{0}0}$\phantom{00} & \phantom{00}$\overset{\phantom{0}\scaleto{[0,0]}{3pt}}{\phantom{0}\phantom{0}0}$\phantom{00} & \phantom{00}$\overset{\phantom{0}\scaleto{[0,0]}{3pt}}{\phantom{0}\phantom{0}0}$\phantom{00} & \phantom{00}$\overset{\phantom{0}\scaleto{[0,0]}{3pt}}{\phantom{0}\phantom{0}0}$\phantom{00} & \phantom{00}$\overset{\phantom{0}\scaleto{[0,25]}{3pt}}{\phantom{0}12}$\phantom{00} & \phantom{00}$\overset{\phantom{0}\scaleto{[0,0]}{3pt}}{\phantom{0}\phantom{0}0}$\phantom{00} & \phantom{00}$\overset{\phantom{0}\scaleto{[0,0]}{3pt}}{\phantom{0}\phantom{0}0}$\phantom{00} & \phantom{00}$\overset{\phantom{0}\scaleto{[0,0]}{3pt}}{\phantom{0}\phantom{0}0}$\phantom{00} & \phantom{00}$\overset{\phantom{0}\scaleto{[1,8]}{3pt}}{\phantom{0}\phantom{0}4}$\phantom{00} & \phantom{00}$\overset{\phantom{0}\scaleto{[0,1]}{3pt}}{\phantom{0}\phantom{0}0}$\phantom{00} & \phantom{00}$\overset{\phantom{0}\scaleto{[0,1]}{3pt}}{\phantom{0}\phantom{0}0}$\phantom{00} & \phantom{00}$\overset{\phantom{0}\scaleto{[0,12]}{3pt}}{\phantom{0}\phantom{0}4}$\phantom{00} & \phantom{00}$\overset{\phantom{0}\scaleto{[4,38]}{3pt}}{\phantom{0}21}$\phantom{00} & \phantom{00}$\overset{\phantom{0}\scaleto{[55,72]}{3pt}}{\phantom{0}\mathbf{63}}$\phantom{00} \\
 & \texttt{task5} & \phantom{00}$\overset{\phantom{0}\scaleto{[0,0]}{3pt}}{\phantom{0}\phantom{0}0}$\phantom{00} & \phantom{00}$\overset{\phantom{0}\scaleto{[0,9]}{3pt}}{\phantom{0}\phantom{0}4}$\phantom{00} & \phantom{00}$\overset{\phantom{0}\scaleto{[0,1]}{3pt}}{\phantom{0}\phantom{0}0}$\phantom{00} & \phantom{00}$\overset{\phantom{0}\scaleto{[0,1]}{3pt}}{\phantom{0}\phantom{0}0}$\phantom{00} & \phantom{00}$\overset{\phantom{0}\scaleto{[0,0]}{3pt}}{\phantom{0}\phantom{0}0}$\phantom{00} & \phantom{00}$\overset{\phantom{0}\scaleto{[0,2]}{3pt}}{\phantom{0}\phantom{0}1}$\phantom{00} & \phantom{00}$\overset{\phantom{0}\scaleto{[0,1]}{3pt}}{\phantom{0}\phantom{0}0}$\phantom{00} & \phantom{00}$\overset{\phantom{0}\scaleto{[13,57]}{3pt}}{\phantom{0}35}$\phantom{00} & \phantom{00}$\overset{\phantom{0}\scaleto{[0,0]}{3pt}}{\phantom{0}\phantom{0}0}$\phantom{00} & \phantom{00}$\overset{\phantom{0}\scaleto{[0,0]}{3pt}}{\phantom{0}\phantom{0}0}$\phantom{00} & \phantom{00}$\overset{\phantom{0}\scaleto{[0,0]}{3pt}}{\phantom{0}\phantom{0}0}$\phantom{00} & \phantom{00}$\overset{\phantom{0}\scaleto{[2,37]}{3pt}}{\phantom{0}19}$\phantom{00} & \phantom{00}$\overset{\phantom{0}\scaleto{[0,0]}{3pt}}{\phantom{0}\phantom{0}0}$\phantom{00} & \phantom{00}$\overset{\phantom{0}\scaleto{[0,1]}{3pt}}{\phantom{0}\phantom{0}0}$\phantom{00} & \phantom{00}$\overset{\phantom{0}\scaleto{[0,4]}{3pt}}{\phantom{0}\phantom{0}1}$\phantom{00} & \phantom{00}$\overset{\phantom{0}\scaleto{[4,36]}{3pt}}{\phantom{0}20}$\phantom{00} & \phantom{00}$\overset{\phantom{0}\scaleto{[49,77]}{3pt}}{\phantom{0}\mathbf{63}}$\phantom{00} \\
\multirow{-6}{*}{\texttt{puzzle-4x4-100M-sparse}} & \texttt{agg. (5 tasks)} & \phantom{00}$\overset{\phantom{0}\scaleto{[0,0]}{3pt}}{\phantom{0}\phantom{0}0}$\phantom{00} & \phantom{00}$\overset{\phantom{0}\scaleto{[2,9]}{3pt}}{\phantom{0}\phantom{0}5}$\phantom{00} & \phantom{00}$\overset{\phantom{0}\scaleto{[14,36]}{3pt}}{\phantom{0}25}$\phantom{00} & \phantom{00}$\overset{\phantom{0}\scaleto{[0,0]}{3pt}}{\phantom{0}\phantom{0}0}$\phantom{00} & \phantom{00}$\overset{\phantom{0}\scaleto{[0,0]}{3pt}}{\phantom{0}\phantom{0}0}$\phantom{00} & \phantom{00}$\overset{\phantom{0}\scaleto{[0,1]}{3pt}}{\phantom{0}\phantom{0}0}$\phantom{00} & \phantom{00}$\overset{\phantom{0}\scaleto{[0,1]}{3pt}}{\phantom{0}\phantom{0}0}$\phantom{00} & \phantom{00}$\overset{\phantom{0}\scaleto{[25,45]}{3pt}}{\phantom{0}35}$\phantom{00} & \phantom{00}$\overset{\phantom{0}\scaleto{[0,0]}{3pt}}{\phantom{0}\phantom{0}0}$\phantom{00} & \phantom{00}$\overset{\phantom{0}\scaleto{[0,0]}{3pt}}{\phantom{0}\phantom{0}0}$\phantom{00} & \phantom{00}$\overset{\phantom{0}\scaleto{[0,0]}{3pt}}{\phantom{0}\phantom{0}0}$\phantom{00} & \phantom{00}$\overset{\phantom{0}\scaleto{[10,21]}{3pt}}{\phantom{0}16}$\phantom{00} & \phantom{00}$\overset{\phantom{0}\scaleto{[0,0]}{3pt}}{\phantom{0}\phantom{0}0}$\phantom{00} & \phantom{00}$\overset{\phantom{0}\scaleto{[0,0]}{3pt}}{\phantom{0}\phantom{0}0}$\phantom{00} & \phantom{00}$\overset{\phantom{0}\scaleto{[3,9]}{3pt}}{\phantom{0}\phantom{0}6}$\phantom{00} & \phantom{00}$\overset{\phantom{0}\scaleto{[27,47]}{3pt}}{\phantom{0}37}$\phantom{00} & \phantom{00}$\overset{\phantom{0}\scaleto{[48,64]}{3pt}}{\phantom{0}\mathbf{56}}$\phantom{00} \\
\midrule
 & \texttt{task1} & \phantom{00}$\overset{\phantom{0}\scaleto{[2,7]}{3pt}}{\phantom{0}\phantom{0}4}$\phantom{00} & \phantom{00}$\overset{\phantom{0}\scaleto{[9,22]}{3pt}}{\phantom{0}15}$\phantom{00} & \phantom{00}$\overset{\phantom{0}\scaleto{[27,50]}{3pt}}{\phantom{0}38}$\phantom{00} & \phantom{00}$\overset{\phantom{0}\scaleto{[12,23]}{3pt}}{\phantom{0}17}$\phantom{00} & \phantom{00}$\overset{\phantom{0}\scaleto{[12,20]}{3pt}}{\phantom{0}16}$\phantom{00} & \phantom{00}$\overset{\phantom{0}\scaleto{[0,1]}{3pt}}{\phantom{0}\phantom{0}0}$\phantom{00} & \phantom{00}$\overset{\phantom{0}\scaleto{[1,3]}{3pt}}{\phantom{0}\phantom{0}2}$\phantom{00} & \phantom{00}$\overset{\phantom{0}\scaleto{[8,15]}{3pt}}{\phantom{0}11}$\phantom{00} & \phantom{00}$\overset{\phantom{0}\scaleto{[1,11]}{3pt}}{\phantom{0}\phantom{0}6}$\phantom{00} & \phantom{00}$\overset{\phantom{0}\scaleto{[29,54]}{3pt}}{\phantom{0}41}$\phantom{00} & \phantom{00}$\overset{\phantom{0}\scaleto{[33,52]}{3pt}}{\phantom{0}\mathbf{42}}$\phantom{00} & \phantom{00}$\overset{\phantom{0}\scaleto{[0,3]}{3pt}}{\phantom{0}\phantom{0}1}$\phantom{00} & \phantom{00}$\overset{\phantom{0}\scaleto{[6,24]}{3pt}}{\phantom{0}15}$\phantom{00} & \phantom{00}$\overset{\phantom{0}\scaleto{[9,20]}{3pt}}{\phantom{0}14}$\phantom{00} & \phantom{00}$\overset{\phantom{0}\scaleto{[7,13]}{3pt}}{\phantom{0}10}$\phantom{00} & \phantom{00}$\overset{\phantom{0}\scaleto{[8,26]}{3pt}}{\phantom{0}17}$\phantom{00} & \phantom{00}$\overset{\phantom{0}\scaleto{[18,32]}{3pt}}{\phantom{0}25}$\phantom{00} \\
 & \texttt{task2} & \phantom{00}$\overset{\phantom{0}\scaleto{[0,0]}{3pt}}{\phantom{0}\phantom{0}0}$\phantom{00} & \phantom{00}$\overset{\phantom{0}\scaleto{[0,1]}{3pt}}{\phantom{0}\phantom{0}0}$\phantom{00} & \phantom{00}$\overset{\phantom{0}\scaleto{[0,1]}{3pt}}{\phantom{0}\phantom{0}0}$\phantom{00} & \phantom{00}$\overset{\phantom{0}\scaleto{[0,1]}{3pt}}{\phantom{0}\phantom{0}0}$\phantom{00} & \phantom{00}$\overset{\phantom{0}\scaleto{[0,0]}{3pt}}{\phantom{0}\phantom{0}0}$\phantom{00} & \phantom{00}$\overset{\phantom{0}\scaleto{[0,0]}{3pt}}{\phantom{0}\phantom{0}0}$\phantom{00} & \phantom{00}$\overset{\phantom{0}\scaleto{[0,0]}{3pt}}{\phantom{0}\phantom{0}0}$\phantom{00} & \phantom{00}$\overset{\phantom{0}\scaleto{[0,2]}{3pt}}{\phantom{0}\phantom{0}1}$\phantom{00} & \phantom{00}$\overset{\phantom{0}\scaleto{[0,0]}{3pt}}{\phantom{0}\phantom{0}0}$\phantom{00} & \phantom{00}$\overset{\phantom{0}\scaleto{[0,0]}{3pt}}{\phantom{0}\phantom{0}0}$\phantom{00} & \phantom{00}$\overset{\phantom{0}\scaleto{[0,1]}{3pt}}{\phantom{0}\phantom{0}0}$\phantom{00} & \phantom{00}$\overset{\phantom{0}\scaleto{[0,0]}{3pt}}{\phantom{0}\phantom{0}0}$\phantom{00} & \phantom{00}$\overset{\phantom{0}\scaleto{[0,1]}{3pt}}{\phantom{0}\phantom{0}0}$\phantom{00} & \phantom{00}$\overset{\phantom{0}\scaleto{[0,1]}{3pt}}{\phantom{0}\phantom{0}0}$\phantom{00} & \phantom{00}$\overset{\phantom{0}\scaleto{[0,1]}{3pt}}{\phantom{0}\phantom{0}0}$\phantom{00} & \phantom{00}$\overset{\phantom{0}\scaleto{[0,5]}{3pt}}{\phantom{0}\phantom{0}\mathbf{3}}$\phantom{00} & \phantom{00}$\overset{\phantom{0}\scaleto{[0,0]}{3pt}}{\phantom{0}\phantom{0}0}$\phantom{00} \\
 & \texttt{task3} & \phantom{00}$\overset{\phantom{0}\scaleto{[0,1]}{3pt}}{\phantom{0}\phantom{0}0}$\phantom{00} & \phantom{00}$\overset{\phantom{0}\scaleto{[0,2]}{3pt}}{\phantom{0}\phantom{0}1}$\phantom{00} & \phantom{00}$\overset{\phantom{0}\scaleto{[0,2]}{3pt}}{\phantom{0}\phantom{0}1}$\phantom{00} & \phantom{00}$\overset{\phantom{0}\scaleto{[0,0]}{3pt}}{\phantom{0}\phantom{0}0}$\phantom{00} & \phantom{00}$\overset{\phantom{0}\scaleto{[0,1]}{3pt}}{\phantom{0}\phantom{0}0}$\phantom{00} & \phantom{00}$\overset{\phantom{0}\scaleto{[0,0]}{3pt}}{\phantom{0}\phantom{0}0}$\phantom{00} & \phantom{00}$\overset{\phantom{0}\scaleto{[0,0]}{3pt}}{\phantom{0}\phantom{0}0}$\phantom{00} & \phantom{00}$\overset{\phantom{0}\scaleto{[0,1]}{3pt}}{\phantom{0}\phantom{0}0}$\phantom{00} & \phantom{00}$\overset{\phantom{0}\scaleto{[0,1]}{3pt}}{\phantom{0}\phantom{0}0}$\phantom{00} & \phantom{00}$\overset{\phantom{0}\scaleto{[0,0]}{3pt}}{\phantom{0}\phantom{0}0}$\phantom{00} & \phantom{00}$\overset{\phantom{0}\scaleto{[0,0]}{3pt}}{\phantom{0}\phantom{0}0}$\phantom{00} & \phantom{00}$\overset{\phantom{0}\scaleto{[0,0]}{3pt}}{\phantom{0}\phantom{0}0}$\phantom{00} & \phantom{00}$\overset{\phantom{0}\scaleto{[0,7]}{3pt}}{\phantom{0}\phantom{0}\mathbf{3}}$\phantom{00} & \phantom{00}$\overset{\phantom{0}\scaleto{[2,4]}{3pt}}{\phantom{0}\phantom{0}\mathbf{3}}$\phantom{00} & \phantom{00}$\overset{\phantom{0}\scaleto{[0,4]}{3pt}}{\phantom{0}\phantom{0}2}$\phantom{00} & \phantom{00}$\overset{\phantom{0}\scaleto{[0,4]}{3pt}}{\phantom{0}\phantom{0}2}$\phantom{00} & \phantom{00}$\overset{\phantom{0}\scaleto{[0,0]}{3pt}}{\phantom{0}\phantom{0}0}$\phantom{00} \\
 & \texttt{task4} & \phantom{00}$\overset{\phantom{0}\scaleto{[0,0]}{3pt}}{\phantom{0}\phantom{0}0}$\phantom{00} & \phantom{00}$\overset{\phantom{0}\scaleto{[0,0]}{3pt}}{\phantom{0}\phantom{0}0}$\phantom{00} & \phantom{00}$\overset{\phantom{0}\scaleto{[0,0]}{3pt}}{\phantom{0}\phantom{0}0}$\phantom{00} & \phantom{00}$\overset{\phantom{0}\scaleto{[0,0]}{3pt}}{\phantom{0}\phantom{0}0}$\phantom{00} & \phantom{00}$\overset{\phantom{0}\scaleto{[0,0]}{3pt}}{\phantom{0}\phantom{0}0}$\phantom{00} & \phantom{00}$\overset{\phantom{0}\scaleto{[0,0]}{3pt}}{\phantom{0}\phantom{0}0}$\phantom{00} & \phantom{00}$\overset{\phantom{0}\scaleto{[0,0]}{3pt}}{\phantom{0}\phantom{0}0}$\phantom{00} & \phantom{00}$\overset{\phantom{0}\scaleto{[0,1]}{3pt}}{\phantom{0}\phantom{0}0}$\phantom{00} & \phantom{00}$\overset{\phantom{0}\scaleto{[0,0]}{3pt}}{\phantom{0}\phantom{0}0}$\phantom{00} & \phantom{00}$\overset{\phantom{0}\scaleto{[0,0]}{3pt}}{\phantom{0}\phantom{0}0}$\phantom{00} & \phantom{00}$\overset{\phantom{0}\scaleto{[0,0]}{3pt}}{\phantom{0}\phantom{0}0}$\phantom{00} & \phantom{00}$\overset{\phantom{0}\scaleto{[0,0]}{3pt}}{\phantom{0}\phantom{0}0}$\phantom{00} & \phantom{00}$\overset{\phantom{0}\scaleto{[0,0]}{3pt}}{\phantom{0}\phantom{0}0}$\phantom{00} & \phantom{00}$\overset{\phantom{0}\scaleto{[0,1]}{3pt}}{\phantom{0}\phantom{0}0}$\phantom{00} & \phantom{00}$\overset{\phantom{0}\scaleto{[0,0]}{3pt}}{\phantom{0}\phantom{0}0}$\phantom{00} & \phantom{00}$\overset{\phantom{0}\scaleto{[1,4]}{3pt}}{\phantom{0}\phantom{0}\mathbf{2}}$\phantom{00} & \phantom{00}$\overset{\phantom{0}\scaleto{[0,0]}{3pt}}{\phantom{0}\phantom{0}0}$\phantom{00} \\
 & \texttt{task5} & \phantom{00}$\overset{\phantom{0}\scaleto{[0,0]}{3pt}}{\phantom{0}\phantom{0}0}$\phantom{00} & \phantom{00}$\overset{\phantom{0}\scaleto{[0,0]}{3pt}}{\phantom{0}\phantom{0}0}$\phantom{00} & \phantom{00}$\overset{\phantom{0}\scaleto{[0,0]}{3pt}}{\phantom{0}\phantom{0}0}$\phantom{00} & \phantom{00}$\overset{\phantom{0}\scaleto{[0,0]}{3pt}}{\phantom{0}\phantom{0}0}$\phantom{00} & \phantom{00}$\overset{\phantom{0}\scaleto{[0,0]}{3pt}}{\phantom{0}\phantom{0}0}$\phantom{00} & \phantom{00}$\overset{\phantom{0}\scaleto{[0,0]}{3pt}}{\phantom{0}\phantom{0}0}$\phantom{00} & \phantom{00}$\overset{\phantom{0}\scaleto{[0,0]}{3pt}}{\phantom{0}\phantom{0}0}$\phantom{00} & \phantom{00}$\overset{\phantom{0}\scaleto{[0,1]}{3pt}}{\phantom{0}\phantom{0}0}$\phantom{00} & \phantom{00}$\overset{\phantom{0}\scaleto{[0,0]}{3pt}}{\phantom{0}\phantom{0}0}$\phantom{00} & \phantom{00}$\overset{\phantom{0}\scaleto{[0,0]}{3pt}}{\phantom{0}\phantom{0}0}$\phantom{00} & \phantom{00}$\overset{\phantom{0}\scaleto{[0,0]}{3pt}}{\phantom{0}\phantom{0}0}$\phantom{00} & \phantom{00}$\overset{\phantom{0}\scaleto{[0,0]}{3pt}}{\phantom{0}\phantom{0}0}$\phantom{00} & \phantom{00}$\overset{\phantom{0}\scaleto{[0,0]}{3pt}}{\phantom{0}\phantom{0}0}$\phantom{00} & \phantom{00}$\overset{\phantom{0}\scaleto{[0,0]}{3pt}}{\phantom{0}\phantom{0}0}$\phantom{00} & \phantom{00}$\overset{\phantom{0}\scaleto{[0,0]}{3pt}}{\phantom{0}\phantom{0}0}$\phantom{00} & \phantom{00}$\overset{\phantom{0}\scaleto{[0,0]}{3pt}}{\phantom{0}\phantom{0}0}$\phantom{00} & \phantom{00}$\overset{\phantom{0}\scaleto{[0,0]}{3pt}}{\phantom{0}\phantom{0}0}$\phantom{00} \\
\multirow{-6}{*}{\texttt{cube-triple-play}} & \texttt{agg. (5 tasks)} & \phantom{00}$\overset{\phantom{0}\scaleto{[0,2]}{3pt}}{\phantom{0}\phantom{0}1}$\phantom{00} & \phantom{00}$\overset{\phantom{0}\scaleto{[1,5]}{3pt}}{\phantom{0}\phantom{0}3}$\phantom{00} & \phantom{00}$\overset{\phantom{0}\scaleto{[3,13]}{3pt}}{\phantom{0}\phantom{0}\mathbf{8}}$\phantom{00} & \phantom{00}$\overset{\phantom{0}\scaleto{[1,6]}{3pt}}{\phantom{0}\phantom{0}4}$\phantom{00} & \phantom{00}$\overset{\phantom{0}\scaleto{[1,5]}{3pt}}{\phantom{0}\phantom{0}3}$\phantom{00} & \phantom{00}$\overset{\phantom{0}\scaleto{[0,0]}{3pt}}{\phantom{0}\phantom{0}0}$\phantom{00} & \phantom{00}$\overset{\phantom{0}\scaleto{[0,1]}{3pt}}{\phantom{0}\phantom{0}0}$\phantom{00} & \phantom{00}$\overset{\phantom{0}\scaleto{[1,4]}{3pt}}{\phantom{0}\phantom{0}3}$\phantom{00} & \phantom{00}$\overset{\phantom{0}\scaleto{[0,2]}{3pt}}{\phantom{0}\phantom{0}1}$\phantom{00} & \phantom{00}$\overset{\phantom{0}\scaleto{[3,14]}{3pt}}{\phantom{0}\phantom{0}\mathbf{8}}$\phantom{00} & \phantom{00}$\overset{\phantom{0}\scaleto{[3,14]}{3pt}}{\phantom{0}\phantom{0}\mathbf{8}}$\phantom{00} & \phantom{00}$\overset{\phantom{0}\scaleto{[0,1]}{3pt}}{\phantom{0}\phantom{0}0}$\phantom{00} & \phantom{00}$\overset{\phantom{0}\scaleto{[1,6]}{3pt}}{\phantom{0}\phantom{0}4}$\phantom{00} & \phantom{00}$\overset{\phantom{0}\scaleto{[2,5]}{3pt}}{\phantom{0}\phantom{0}4}$\phantom{00} & \phantom{00}$\overset{\phantom{0}\scaleto{[1,4]}{3pt}}{\phantom{0}\phantom{0}2}$\phantom{00} & \phantom{00}$\overset{\phantom{0}\scaleto{[2,7]}{3pt}}{\phantom{0}\phantom{0}5}$\phantom{00} & \phantom{00}$\overset{\phantom{0}\scaleto{[2,8]}{3pt}}{\phantom{0}\phantom{0}5}$\phantom{00} \\
\bottomrule
\end{tabular}}}
    \caption{\textbf{Full offline results at 1M training steps. (8 seeds)}}
    \label{tab:sparse-results-tab}
\end{table}

%% file: neurips_2026.bbl
\begin{thebibliography}{52}
\providecommand{\natexlab}[1]{#1}
\providecommand{\url}[1]{\texttt{#1}}
\expandafter\ifx\csname urlstyle\endcsname\relax
  \providecommand{\doi}[1]{doi: #1}\else
  \providecommand{\doi}{doi: \begingroup \urlstyle{rm}\Url}\fi

\bibitem[Ada et~al.(2024)Ada, Oztop, and Ugur]{srdp_ada2024}
Suzan~Ece Ada, Erhan Oztop, and Emre Ugur.
\newblock Diffusion policies for out-of-distribution generalization in offline reinforcement learning.
\newblock \emph{IEEE Robotics and Automation Letters (RA-L)}, 9:\penalty0 3116--3123, 2024.

\bibitem[Ball et~al.(2023)Ball, Smith, Kostrikov, and Levine]{ball2023efficient}
Philip~J Ball, Laura Smith, Ilya Kostrikov, and Sergey Levine.
\newblock Efficient online reinforcement learning with offline data.
\newblock In \emph{International Conference on Machine Learning}, pages 1577--1594. PMLR, 2023.

\bibitem[Bradbury et~al.(2018)Bradbury, Frostig, Hawkins, Johnson, Katariya, Leary, Maclaurin, Necula, Paszke, Vander{P}las, Wanderman-{M}ilne, and Zhang]{jax2018github}
James Bradbury, Roy Frostig, Peter Hawkins, Matthew~James Johnson, Yash Katariya, Chris Leary, Dougal Maclaurin, George Necula, Adam Paszke, Jake Vander{P}las, Skye Wanderman-{M}ilne, and Qiao Zhang.
\newblock {JAX}: composable transformations of {P}ython+{N}um{P}y programs, 2018.
\newblock URL \url{http://github.com/jax-ml/jax}.

\bibitem[Chen et~al.(2024{\natexlab{a}})Chen, Lu, Wang, Su, and Zhu]{srpo_chen2024}
Huayu Chen, Cheng Lu, Zhengyi Wang, Hang Su, and Jun Zhu.
\newblock Score regularized policy optimization through diffusion behavior.
\newblock In \emph{International Conference on Learning Representations (ICLR)}, 2024{\natexlab{a}}.

\bibitem[Chen et~al.(2018)Chen, Rubanova, Bettencourt, and Duvenaud]{chen2018neural}
Ricky T.~Q. Chen, Yulia Rubanova, Jesse Bettencourt, and David~K Duvenaud.
\newblock Neural ordinary differential equations.
\newblock In \emph{Advances in Neural Information Processing Systems}, volume~31. Curran Associates, Inc., 2018.

\bibitem[Chen et~al.(2024{\natexlab{b}})Chen, Wang, and Zhou]{dtql_chen2024}
Tianyu Chen, Zhendong Wang, and Mingyuan Zhou.
\newblock Diffusion policies creating a trust region for offline reinforcement learning.
\newblock In \emph{Neural Information Processing Systems (NeurIPS)}, 2024{\natexlab{b}}.

\bibitem[De~Santi et~al.()De~Santi, Protopapas, Hsieh, and Krause]{deverifier}
Riccardo De~Santi, Kimon Protopapas, Ya-Ping Hsieh, and Andreas Krause.
\newblock Verifier-constrained flow expansion for discovery beyond the data.
\newblock In \emph{The Fourteenth International Conference on Learning Representations}.

\bibitem[De~Santi et~al.(2025{\natexlab{a}})De~Santi, Vlastelica, Hsieh, Shen, He, and Krause]{de2025flow}
Riccardo De~Santi, Marin Vlastelica, Ya-Ping Hsieh, Zebang Shen, Niao He, and Andreas Krause.
\newblock Flow density control: Generative optimization beyond entropy-regularized fine-tuning.
\newblock \emph{Advances in Neural Information Processing Systems (NeurIPS)}, 2025{\natexlab{a}}.

\bibitem[De~Santi et~al.(2025{\natexlab{b}})De~Santi, Vlastelica, Hsieh, Shen, He, and Krause]{de2025provable}
Riccardo De~Santi, Marin Vlastelica, Ya-Ping Hsieh, Zebang Shen, Niao He, and Andreas Krause.
\newblock Provable maximum entropy manifold exploration via diffusion models.
\newblock \emph{International Conference on Machine Learning (ICML)}, 2025{\natexlab{b}}.

\bibitem[Dhariwal and Nichol(2021)]{dhariwal2021diffusion}
Prafulla Dhariwal and Alexander Nichol.
\newblock Diffusion models beat gans on image synthesis.
\newblock \emph{Advances in neural information processing systems}, 34:\penalty0 8780--8794, 2021.

\bibitem[Ding et~al.(2024)Ding, Hu, Zhang, Ren, Zhang, Yu, Wang, and Shi]{qvpo_ding2024}
Shutong Ding, Ke~Hu, Zhenhao Zhang, Kan Ren, Weinan Zhang, Jingyi Yu, Jingya Wang, and Ye~Shi.
\newblock Diffusion-based reinforcement learning via {Q}-weighted variational policy optimization.
\newblock In \emph{Neural Information Processing Systems (NeurIPS)}, 2024.

\bibitem[Ding and Jin(2024)]{consistencyac_ding2024}
Zihan Ding and Chi Jin.
\newblock Consistency models as a rich and efficient policy class for reinforcement learning.
\newblock In \emph{International Conference on Learning Representations (ICLR)}, 2024.

\bibitem[Domingo-Enrich et~al.(2025)Domingo-Enrich, Drozdzal, Karrer, and Chen]{domingo-enrich2025adjoint}
Carles Domingo-Enrich, Michal Drozdzal, Brian Karrer, and Ricky T.~Q. Chen.
\newblock Adjoint matching: Fine-tuning flow and diffusion generative models with memoryless stochastic optimal control.
\newblock In \emph{The Thirteenth International Conference on Learning Representations}, 2025.
\newblock URL \url{https://openreview.net/forum?id=xQBRrtQM8u}.

\bibitem[Dong et~al.(2026)Dong, Li, Sadigh, and Finn]{dong2026expo}
Perry Dong, Qiyang Li, Dorsa Sadigh, and Chelsea Finn.
\newblock {EXPO}: Stable reinforcement learning with expressive policies.
\newblock In \emph{The Fourteenth International Conference on Learning Representations}, 2026.
\newblock URL \url{https://openreview.net/forum?id=aFjSjkB6CV}.

\bibitem[Fang et~al.(2025)Fang, Liu, Zhang, Wang, and Jing]{fang2025diffusion}
Linjiajie Fang, Ruoxue Liu, Jing Zhang, Wenjia Wang, and Bingyi Jing.
\newblock Diffusion actor-critic: Formulating constrained policy iteration as diffusion noise regression for offline reinforcement learning.
\newblock In \emph{The Thirteenth International Conference on Learning Representations}, 2025.
\newblock URL \url{https://openreview.net/forum?id=ldVkAO09Km}.

\bibitem[Fujimoto et~al.(2019)Fujimoto, Meger, and Precup]{fujimoto2019off}
Scott Fujimoto, David Meger, and Doina Precup.
\newblock Off-policy deep reinforcement learning without exploration.
\newblock In \emph{International conference on machine learning}, pages 2052--2062. PMLR, 2019.

\bibitem[Haarnoja et~al.(2018)Haarnoja, Zhou, Abbeel, and Levine]{haarnoja2018soft}
Tuomas Haarnoja, Aurick Zhou, Pieter Abbeel, and Sergey Levine.
\newblock Soft actor-critic: Off-policy maximum entropy deep reinforcement learning with a stochastic actor.
\newblock In \emph{International conference on machine learning}, pages 1861--1870. Pmlr, 2018.

\bibitem[Hansen-Estruch et~al.(2023)Hansen-Estruch, Kostrikov, Janner, Kuba, and Levine]{hansen2023idql}
Philippe Hansen-Estruch, Ilya Kostrikov, Michael Janner, Jakub~Grudzien Kuba, and Sergey Levine.
\newblock {IDQL}: Implicit {Q}-learning as an actor-critic method with diffusion policies.
\newblock \emph{arXiv preprint arXiv:2304.10573}, 2023.

\bibitem[Hazan et~al.(2019)Hazan, Kakade, Singh, and Van~Soest]{hazan2019maxent}
E.~Hazan, S.~Kakade, K.~Singh, and A.~Van~Soest.
\newblock Provably efficient maximum entropy exploration.
\newblock In \emph{International Conference on Machine Learning}, 2019.

\bibitem[He et~al.(2023)He, Shen, Zhang, Tan, and Wang]{diffcps_he2023}
Longxiang He, Li~Shen, Linrui Zhang, Junbo Tan, and Xueqian Wang.
\newblock {DiffCPS}: Diffusion model based constrained policy search for offline reinforcement learning.
\newblock \emph{ArXiv}, abs/2310.05333, 2023.

\bibitem[Ho et~al.(2020)Ho, Jain, and Abbeel]{ho2020denoising}
Jonathan Ho, Ajay Jain, and Pieter Abbeel.
\newblock Denoising diffusion probabilistic models.
\newblock In \emph{Proceedings of the 34th International Conference on Neural Information Processing Systems}, NIPS '20, Red Hook, NY, USA, 2020. Curran Associates Inc.
\newblock ISBN 9781713829546.

\bibitem[Holderrieth and Erives(2025)]{holderrieth2025introductionflowmatchingdiffusion}
Peter Holderrieth and Ezra Erives.
\newblock An introduction to flow matching and diffusion models, 2025.
\newblock URL \url{https://arxiv.org/abs/2506.02070}.

\bibitem[Hsieh et~al.(2019)Hsieh, Liu, and Cevher]{hsieh2019finding}
Y.-P. Hsieh, C.~Liu, and V.~Cevher.
\newblock Finding mixed nash equilibria of generative adversarial networks.
\newblock In \emph{International Conference on Machine Learning}, pages 2810--2819. PMLR, 2019.

\bibitem[Hu et~al.(2025)Hu, Liao, Xu, Liu, Li, Ie, Fei, and Liu]{hu2025improving}
Xixi Hu, Runlong Liao, Keyang Xu, Bo~Liu, Yeqing Li, Eugene Ie, Hongliang Fei, and Qiang Liu.
\newblock Improving rectified flow with boundary conditions, 2025.
\newblock URL \url{https://arxiv.org/abs/2506.15864}.

\bibitem[Kang et~al.(2023)Kang, Ma, Du, Pang, and Yan]{edp_kang2023}
Bingyi Kang, Xiao Ma, Chao Du, Tianyu Pang, and Shuicheng Yan.
\newblock Efficient diffusion policies for offline reinforcement learning.
\newblock In \emph{Neural Information Processing Systems (NeurIPS)}, 2023.

\bibitem[Kappen(2005)]{kappen2005path}
H~J Kappen.
\newblock Path integrals and symmetry breaking for optimal control theory.
\newblock \emph{Journal of Statistical Mechanics: Theory and Experiment}, 2005\penalty0 (11), nov 2005.

\bibitem[Kostrikov et~al.(2022)Kostrikov, Nair, and Levine]{kostrikov2021offline}
Ilya Kostrikov, Ashvin Nair, and Sergey Levine.
\newblock Offline reinforcement learning with implicit q-learning.
\newblock In \emph{International Conference on Learning Representations}, 2022.
\newblock URL \url{https://openreview.net/forum?id=68n2s9ZJWF8}.

\bibitem[Kumar et~al.(2019)Kumar, Fu, Tucker, and Levine]{kumar2019stabilizing}
Aviral Kumar, Justin Fu, George Tucker, and Sergey Levine.
\newblock \emph{Stabilizing off-policy Q-learning via bootstrapping error reduction}.
\newblock Curran Associates Inc., Red Hook, NY, USA, 2019.

\bibitem[Levine et~al.(2020)Levine, Kumar, Tucker, and Fu]{levine2020offline}
Sergey Levine, Aviral Kumar, George Tucker, and Justin Fu.
\newblock Offline reinforcement learning: Tutorial, review, and perspectives on open problems, 2020.
\newblock URL \url{https://arxiv.org/abs/2005.01643}.

\bibitem[Li and Levine(2026)]{li2026qlearning}
Qiyang Li and Sergey Levine.
\newblock Q-learning with adjoint matching.
\newblock In \emph{The Fourteenth International Conference on Learning Representations}, 2026.
\newblock URL \url{https://openreview.net/forum?id=vd4eNAdtO6}.

\bibitem[Li et~al.(2025)Li, Zhou, and Levine]{li2025reinforcement}
Qiyang Li, Zhiyuan Zhou, and Sergey Levine.
\newblock Reinforcement learning with action chunking.
\newblock In \emph{The Thirty-ninth Annual Conference on Neural Information Processing Systems}, 2025.
\newblock URL \url{https://openreview.net/forum?id=XUks1Y96NR}.

\bibitem[Lillicrap et~al.(2019)Lillicrap, Hunt, Pritzel, Heess, Erez, Tassa, Silver, and Wierstra]{lillicrap2019continuouscontroldeepreinforcement}
Timothy~P. Lillicrap, Jonathan~J. Hunt, Alexander Pritzel, Nicolas Heess, Tom Erez, Yuval Tassa, David Silver, and Daan Wierstra.
\newblock Continuous control with deep reinforcement learning, 2019.
\newblock URL \url{https://arxiv.org/abs/1509.02971}.

\bibitem[Lipman et~al.(2023)Lipman, Chen, Ben-Hamu, Nickel, and Le]{lipman2023flow}
Yaron Lipman, Ricky T.~Q. Chen, Heli Ben-Hamu, Maximilian Nickel, and Matthew Le.
\newblock Flow matching for generative modeling.
\newblock In \emph{The Eleventh International Conference on Learning Representations}, 2023.
\newblock URL \url{https://openreview.net/forum?id=PqvMRDCJT9t}.

\bibitem[Lu et~al.(2023)Lu, Chen, Chen, Su, Li, and Zhu]{qgpo_lu2023}
Cheng Lu, Huayu Chen, Jianfei Chen, Hang Su, Chongxuan Li, and Jun Zhu.
\newblock Contrastive energy prediction for exact energy-guided diffusion sampling in offline reinforcement learning.
\newblock In \emph{International Conference on Machine Learning (ICML)}, 2023.

\bibitem[Nair et~al.(2021)Nair, Dalal, Gupta, and Levine]{nair2020awac}
Ashvin Nair, Murtaza Dalal, Abhishek Gupta, and Sergey Levine.
\newblock {\{}AWAC{\}}: Accelerating online reinforcement learning with offline datasets, 2021.
\newblock URL \url{https://openreview.net/forum?id=OJiM1R3jAtZ}.

\bibitem[Park et~al.(2025{\natexlab{a}})Park, Frans, Eysenbach, and Levine]{park2025ogbench}
Seohong Park, Kevin Frans, Benjamin Eysenbach, and Sergey Levine.
\newblock {OGB}ench: Benchmarking offline goal-conditioned {RL}.
\newblock In \emph{The Thirteenth International Conference on Learning Representations}, 2025{\natexlab{a}}.
\newblock URL \url{https://openreview.net/forum?id=M992mjgKzI}.

\bibitem[Park et~al.(2025{\natexlab{b}})Park, Frans, Mann, Eysenbach, Kumar, and Levine]{park2025horizon}
Seohong Park, Kevin Frans, Deepinder Mann, Benjamin Eysenbach, Aviral Kumar, and Sergey Levine.
\newblock Horizon reduction makes {RL} scalable.
\newblock In \emph{The Thirty-ninth Annual Conference on Neural Information Processing Systems}, 2025{\natexlab{b}}.
\newblock URL \url{https://openreview.net/forum?id=hguaupzLCU}.

\bibitem[Park et~al.(2025{\natexlab{c}})Park, Li, and Levine]{park2025flow}
Seohong Park, Qiyang Li, and Sergey Levine.
\newblock Flow {Q}-learning.
\newblock In \emph{Forty-second International Conference on Machine Learning}, 2025{\natexlab{c}}.
\newblock URL \url{https://openreview.net/forum?id=KVf2SFL1pi}.

\bibitem[Peng et~al.(2019)Peng, Kumar, Zhang, and Levine]{peng2019advantage}
Xue~Bin Peng, Aviral Kumar, Grace Zhang, and Sergey Levine.
\newblock Advantage-weighted regression: Simple and scalable off-policy reinforcement learning, 2019.
\newblock URL \url{https://arxiv.org/abs/1910.00177}.

\bibitem[Peters et~al.(2010)Peters, Mulling, and Altun]{peters2010relative}
Jan Peters, Katharina Mulling, and Yasemin Altun.
\newblock Relative entropy policy search.
\newblock In \emph{Proceedings of the AAAI Conference on Artificial Intelligence}, volume~24, pages 1607--1612, 2010.

\bibitem[Psenka et~al.(2024)Psenka, Escontrela, Abbeel, and Ma]{qsm_psenka2024}
Michael Psenka, Alejandro Escontrela, Pieter Abbeel, and Yi~Ma.
\newblock Learning a diffusion model policy from rewards via {Q}-score matching.
\newblock In \emph{International Conference on Machine Learning (ICML)}, 2024.

\bibitem[Rawlik et~al.(2013)Rawlik, Toussaint, and Vijayakumar]{rawlik2013stochastic}
Konrad Rawlik, Marc Toussaint, and Sethu Vijayakumar.
\newblock On stochastic optimal control and reinforcement learning by approximate inference.
\newblock In \emph{Twenty-Third International Joint Conference on Artificial Intelligence}, 2013.

\bibitem[Ren et~al.(2025)Ren, Lidard, Ankile, Simeonov, Agrawal, Majumdar, Burchfiel, Dai, and Simchowitz]{ren2024diffusion}
Allen~Z. Ren, Justin Lidard, Lars~Lien Ankile, Anthony Simeonov, Pulkit Agrawal, Anirudha Majumdar, Benjamin Burchfiel, Hongkai Dai, and Max Simchowitz.
\newblock Diffusion policy policy optimization.
\newblock In \emph{The Thirteenth International Conference on Learning Representations}, 2025.
\newblock URL \url{https://openreview.net/forum?id=mEpqHvbD2h}.

\bibitem[Song and Ermon(2019)]{song2019generative}
Yang Song and Stefano Ermon.
\newblock \emph{Generative modeling by estimating gradients of the data distribution}.
\newblock Curran Associates Inc., Red Hook, NY, USA, 2019.

\bibitem[Tarasov et~al.(2024)Tarasov, Kurenkov, Nikulin, and Kolesnikov]{tarasov2024revisiting}
Denis Tarasov, Vladislav Kurenkov, Alexander Nikulin, and Sergey Kolesnikov.
\newblock Revisiting the minimalist approach to offline reinforcement learning.
\newblock \emph{Advances in Neural Information Processing Systems}, 36, 2024.

\bibitem[Uhlenbeck and Ornstein(1930)]{uhlenbeck1930theory}
George~E Uhlenbeck and Leonard~S Ornstein.
\newblock On the theory of the brownian motion.
\newblock \emph{Physical review}, 36\penalty0 (5):\penalty0 823, 1930.

\bibitem[Vincent(2011)]{vincent2011connection}
Pascal Vincent.
\newblock A connection between score matching and denoising autoencoders.
\newblock \emph{Neural Computation}, 23:\penalty0 1661--1674, 2011.
\newblock URL \url{https://api.semanticscholar.org/CorpusID:5560643}.

\bibitem[Wagenmaker et~al.(2025)Wagenmaker, Nakamoto, Zhang, Park, Yagoub, Nagabandi, Gupta, and Levine]{wagenmaker2025steering}
Andrew Wagenmaker, Mitsuhiko Nakamoto, Yunchu Zhang, Seohong Park, Waleed Yagoub, Anusha Nagabandi, Abhishek Gupta, and Sergey Levine.
\newblock Steering your diffusion policy with latent space reinforcement learning.
\newblock \emph{Conference on Robot Learning}, 2025.

\bibitem[Wang et~al.(2023)Wang, Hunt, and Zhou]{dql_wang2023}
Zhendong Wang, Jonathan~J Hunt, and Mingyuan Zhou.
\newblock Diffusion policies as an expressive policy class for offline reinforcement learning.
\newblock In \emph{International Conference on Learning Representations (ICLR)}, 2023.

\bibitem[Wang et~al.(2020)Wang, Novikov, Zolna, Merel, Springenberg, Reed, Shahriari, Siegel, Gulcehre, Heess, and de~Freitas]{wang2020critic}
Ziyu Wang, Alexander Novikov, Konrad Zolna, Josh~S Merel, Jost~Tobias Springenberg, Scott~E Reed, Bobak Shahriari, Noah Siegel, Caglar Gulcehre, Nicolas Heess, and Nando de~Freitas.
\newblock Critic regularized regression.
\newblock In H.~Larochelle, M.~Ranzato, R.~Hadsell, M.F. Balcan, and H.~Lin, editors, \emph{Advances in Neural Information Processing Systems}, volume~33, pages 7768--7778. Curran Associates, Inc., 2020.
\newblock URL \url{https://proceedings.neurips.cc/paper_files/paper/2020/file/588cb956d6bbe67078f29f8de420a13d-Paper.pdf}.

\bibitem[Yuan et~al.(2025)Yuan, Mu, Tao, Fang, Zhang, and Su]{yuan2024policy}
Xiu Yuan, Tongzhou Mu, Stone Tao, Yunhao Fang, Mengke Zhang, and Hao Su.
\newblock Policy decorator: Model-agnostic online refinement for large policy model.
\newblock In \emph{The Thirteenth International Conference on Learning Representations}, 2025.
\newblock URL \url{https://openreview.net/forum?id=e5jGTEiJMT}.

\bibitem[Zhang et~al.(2024)Zhang, Luo, Sj{\"o}lund, Sch{\"o}n, and Mattsson]{entropydql_zhang2024}
Ruoqi Zhang, Ziwei Luo, Jens Sj{\"o}lund, Thomas~B Sch{\"o}n, and Per Mattsson.
\newblock Entropy-regularized diffusion policy with {Q}-ensembles for offline reinforcement learning.
\newblock In \emph{Neural Information Processing Systems (NeurIPS)}, 2024.

\end{thebibliography}
